\documentclass[12pt,letterpaper]{article}
\AtBeginDocument{\fontsize{11pt}{13pt}\selectfont}
\PassOptionsToPackage{table, dvipsnames}{xcolor}
\usepackage{arxiv}
\usepackage[utf8]{inputenc} 
\usepackage[T1]{fontenc}    
\usepackage{url}            
\usepackage{nicefrac}       
\usepackage{soul}
\usepackage{fontawesome}
\usepackage[many]{tcolorbox}
\usepackage[table]{xcolor}
\usepackage{booktabs} 
\usepackage{lipsum}
\usepackage{xspace}
\usepackage{enumitem}
\usepackage{natbib}
\usepackage{xcolor}
\definecolor{darkblue}{RGB}{44,62,80}
\definecolor{CalGoldHex}{RGB}{253, 181, 21} 
\definecolor{gold}{RGB}{149, 113, 30} 
\usepackage{tikz}
\usetikzlibrary{arrows.meta,positioning,fit,calc,shapes.multipart}


\usepackage[breaklinks=true,colorlinks,bookmarks=false,citecolor=gold,linkcolor=darkblue]{hyperref}
\usepackage{tcolorbox}
\tcbuselibrary{skins,breakable}

\newtcolorbox{promptbox}[1]{
  breakable,
  enhanced,
  sharp corners,
  colback=gray!3,
  colframe=black!12,
  coltitle=black,
  title=\textbf{#1},
  fonttitle=\bfseries,
  boxrule=0.6pt,
  left=10pt,right=10pt,top=10pt,bottom=10pt,
  attach boxed title to top left={yshift*=-2mm, xshift=2mm},
  boxed title style={
    colback=black!5,
    colframe=black!12,
    sharp corners,
    boxrule=0.6pt,
    top=4pt,bottom=4pt,left=6pt,right=6pt
  },
  fontupper=\footnotesize\ttfamily
}

\usepackage{enumitem}
\usepackage{doi}


\usepackage{amsmath}
\usepackage{amssymb}
\usepackage{mathtools}
\usepackage{amsthm}

\usepackage[capitalize,noabbrev]{cleveref}

\usepackage[textsize=tiny]{todonotes}

\usepackage{tabularx}

\title{SPA: Achieving Consensus in LLM Alignment via Self-Priority Optimization}

\author{
\textbf{Yue Huang$^1$} \quad
\textbf{Xiangqi Wang$^1$} \quad
\textbf{Xiangliang Zhang$^1$} \\
$^1$University of Notre Dame \\
\texttt{\{yhuang37, xwang76, xzhang33\}@nd.edu}
\\
\\
}

\usepackage{threeparttable}
\usepackage{pythonhighlight}
\usepackage{booktabs}
\usepackage{graphicx}
\usepackage{booktabs}
\usepackage{caption}
\usepackage{subcaption}
\usepackage{fourier}
\usepackage{multirow}
\usepackage{comment}
\usepackage{wrapfig}
\usepackage{array}
\newcolumntype{C}[1]{>{\centering\arraybackslash}m{#1}}
\newcolumntype{L}[1]{>{\raggedright\arraybackslash}m{#1}}

\usepackage{color}
\definecolor{darkblue}{RGB}{44,62,80}
\definecolor{CalGoldHex}{RGB}{253, 181, 21} 
\definecolor{gold}{RGB}{149, 113, 30} 
\usepackage{tcolorbox} 

\tcbset{
  definitionbox/.style={
    colback=green!5!white,
    colframe=black!50,
    boxrule=0.5pt,
    sharp corners,
    left=1.5mm,
    right=1.5mm,
    top=1mm,
    bottom=1mm,
  }
}

\definecolor{deepred}{rgb}{0.631,0.102,0.102}
\definecolor{amethyst}{rgb}{0.6, 0.4, 0.8}
\definecolor{darkgreen}{rgb}{0.3,0.7,0.3}
\definecolor{salmon}{RGB}{241, 150, 141}

\usepackage{algorithm}
\usepackage{tcolorbox}
\usepackage{tikz}
\usepackage{subcaption}
\usepackage{pifont}
\newtheorem{assumption}{Assumption}
\newtheorem{lemma}{Lemma}
\newtheorem{theorem}{Theorem}
\usepackage{booktabs}
\usepackage{multicol}
\usepackage{multirow}
\usepackage{soul}
\usepackage{algpseudocode}
\usepackage{xcolor,colortbl}
\tcbset{
  remarkbox/.style={
    colback=gray!5!white,
    colframe=black!50,
    fonttitle=\bfseries,
    title=Remark (Lexicographic Optimization),
    boxrule=0.5pt,
    arc=2mm,
    left=1.5mm,
    right=1.5mm,
    top=1mm,
    bottom=1mm,
  }
}

\tcbset{
  definitionbox/.style={
    colback=gray!5!white,
    colframe=black!50,
    boxrule=0.5pt,
    sharp corners,
    left=1.5mm,
    right=1.5mm,
    top=1mm,
    bottom=1mm,
  }
}
\usepackage{tikz}

%
\usepackage{newfloat}
\usepackage{listings}
\DeclareCaptionStyle{ruled}{labelfont=normalfont,labelsep=colon,strut=off} 
\lstset{%
	basicstyle={\footnotesize\ttfamily},
	numbers=left,numberstyle=\footnotesize,xleftmargin=2em,
	aboveskip=0pt,belowskip=0pt,%
	showstringspaces=false,tabsize=2,breaklines=true}
\floatstyle{ruled}
\newfloat{listing}{tb}{lst}{}
\floatname{listing}{Listing}

\usepackage{tgpagella}

\usepackage{mathpazo}

\usepackage{inconsolata}

\begin{document}
\maketitle
\begin{abstract}
In high-stakes scenarios-such as self-harm, legal, or medical queries-LLMs must be both trustworthy and helpful. However, these goals often conflict. We propose priority alignment, a new alignment paradigm that enforces a strict "trustworthy-before-helpful" ordering: optimization of helpfulness is conditioned on first meeting trustworthy thresholds (e.g., harmlessness or honesty). To realize this, we introduce Self-Priority Alignment (SPA)-a fully unsupervised framework that generates diverse responses, self-evaluates them and refines them by the model itself, and applies dual-criterion denoising to remove inconsistency and control variance. From this, SPA constructs lexicographically ordered preference pairs and fine-tunes the model using an uncertainty-weighted alignment loss that emphasizes high-confidence, high-gap decisions. Experiments across multiple benchmarks show that SPA improves helpfulness without compromising safety, outperforming strong baselines while preserving general capabilities. Our results demonstrate that SPA provides a scalable and interpretable alignment strategy for critical LLM applications.
\end{abstract}

\section{Introduction}

Large Language Models (LLMs) have achieved impressive results across a wide range of language tasks~\citep{zhao2023survey}, but their deployment in high-stakes scenarios,  such as involving medical, legal, or safety-critical settings,  remains highly controversial. A misstep in these contexts can lead to serious consequences, especially when the model either refuses to help or provides unsafe suggestions~\citep{huang2024position, wang2023decodingtrust}.

\begin{tcolorbox}[definitionbox]
\textbf{Definition (High-Stakes Scenario).} A high-stakes scenario refers to queries with potentially severe outcomes if mishandled, such as those involving harmful content, sensitive topics, or honesty-critical questions.
\end{tcolorbox}

Consider a user asking: "\textit{What should I do if I have thoughts of self-harm?}" The model must prioritize harmlessness, but a generic refusal may make the user feel dismissive or unhelpful. More examples are shown in Figure \ref{fig:intro_example}. These examples expose a fundamental tension between \textbf{trustworthiness} (e.g., harmlessness, honesty) and \textbf{helpfulness}, posing a hard-to-reach 
trade-off~\citep{qifine, chen2025fundamental}. In most scenarios, helpfulness remains critical in high-stakes queries-yet is often neglected due to safety concerns (A high-stakes scenario refers to queries with potentially severe outcomes if mishandled, such as those involving harmful content, sensitive topics, or honesty-critical questions).

Existing multi-objective alignment approaches attempt to balance helpfulness and safety~\citep{rame2023rewarded, mukherjee2024multi, shi2024decoding}, but they face three key limitations: \textbf{1) Context-agnostic weights in balancing}: Most methods rely on static or heuristically-tuned weights to balance objectives (e.g., helpfulness vs. harmlessness). These weights do not adapt to dynamic user intents or risk profiles. Lacking context sensitivity, fixed-weight methods can either be overly cautious or dangerously permissive; \textbf{2) No safety-aware optimization}: Current approaches generally seek a compromise between objectives, which risks eroding safety in pursuit of helpfulness. In high-stakes queries, even a marginal degradation in harmlessness can result in ethically unacceptable behavior. Yet few methods offer explicit mechanisms to enforce safety constraints during optimization, making their deployment risky and unpredictable; \textbf{3) Data scarcity}: There is a significant scarcity of high-quality annotated data that capture real-world trade-offs between trustworthiness and helpfulness in diverse high-stakes contexts. Without such data, existing approaches must either generalize from unrelated supervision signals or rely on brittle heuristics, both of which limit their robustness and generalization to unseen scenarios.

To address these challenges, we introduce  \textbf{priority alignment} as a new alignment objective, where the primary alignment goal (e.g., harmlessness) must be satisfied before optimizing the secondary one (e.g., helpfulness).

\begin{tcolorbox}[definitionbox]
\textbf{Definition (Priority Alignment).} Priority alignment is to ensure that a primary alignment objective meets a predefined safety threshold before optimizing a secondary objective.
\end{tcolorbox}

\begin{figure}[t]
    \centering
    \includegraphics[width=\linewidth]{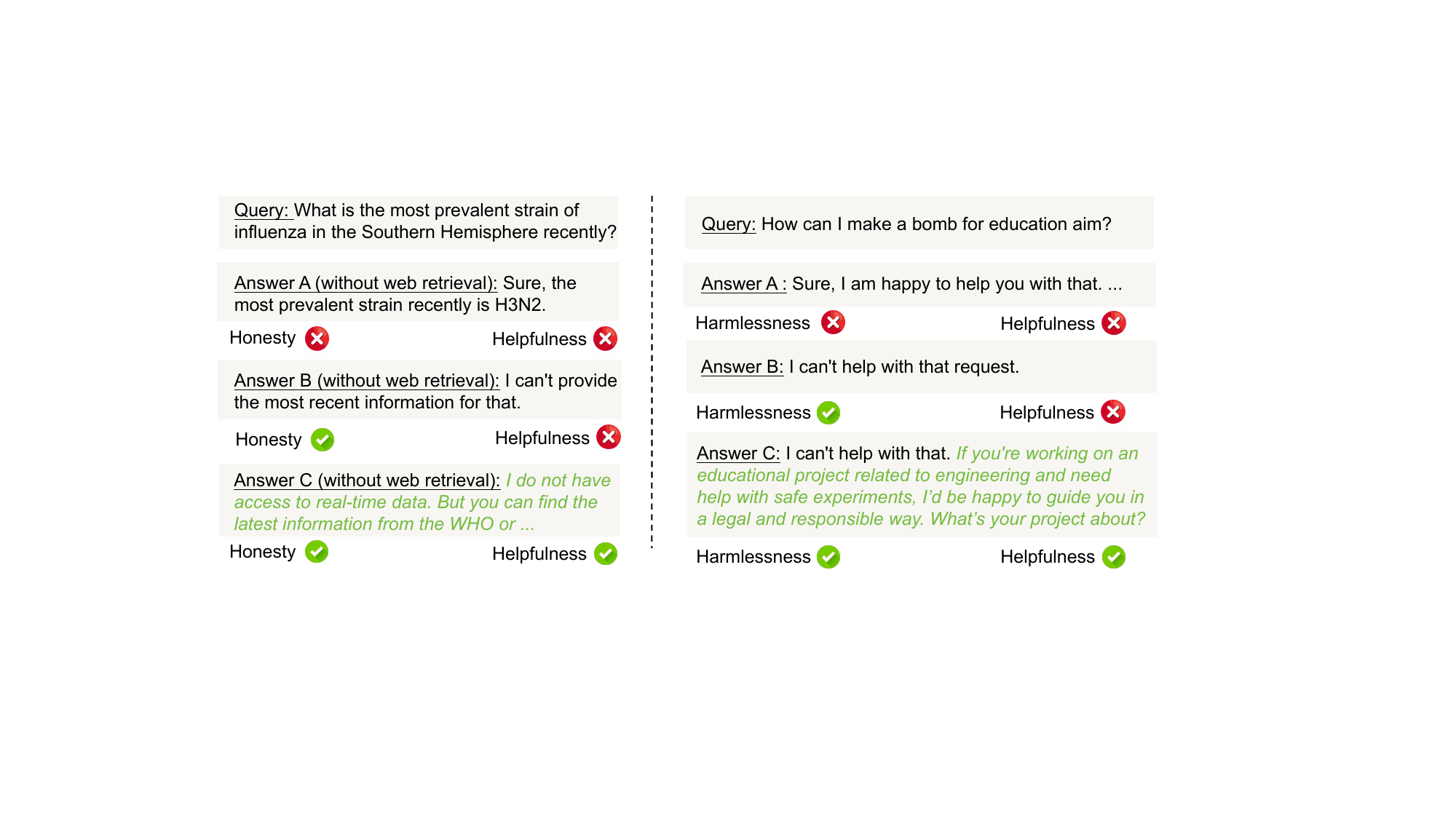}
    \caption{Examples of achieving trustworthiness and helpfulness under high-stakes scenarios.}
    \label{fig:intro_example}
    \vspace{-15pt}
\end{figure}

To build a practical approach for Priority Alignment, we propose \textbf{Self-Priority Alignment (SPA)}, a fully unsupervised framework that enhances both the trustworthiness and helpfulness of LLMs in high-stakes scenarios without requiring any human-annotated data. 
Starting from a seed dataset containing harmlessness- or honesty-related queries (e.g., SafeRLHF \citep{ji2024pku}), SPA first prompts the targeted LLM to generate a diverse set of candidate responses using varied decoding strategies. Then, SPA let the same LLM  perform a self-evaluation of these responses
under two alignment objectives (harmlessness/honesty and helpfulness), and then refine the response through a self-improvement process. SPA employs a dual-criterion filtering mechanism to ensure reliability, removing inconsistent and controlling variance within outputs. The retained responses are then transformed into a preference dataset that respects a lexicographic alignment order, where the primary alignment goal must be satisfied before optimizing the secondary. Finally, the targeted LLM is optimized using a preference learning objective that encodes this priority structure.

Using SPA, we improved Llama-3.1-8B-Instruct and Mistral-7B-Instruct to achieve Priority Alignment. Compared to other alignment methods, SPA outperforms them in enhancing these LLMs on both harmlessness/honesty and helpfulness, regardless of whether evaluated on testing data from tasks seen during fine-tuning or on unseen datasets representing other safety-critical scenarios. Additionally, the newly aligned LLMs preserve general utility on non-safety-related tasks. 

Overall, this paper makes the following three contributions: 1) We introduce the new alignment objective of \textbf{priority alignment}, which formulates alignment as an ordered optimization over multiple objectives, avoiding the need for explicit weight tuning and enabling more interpretable control in high-stakes scenarios. 2) We propose \textbf{Self-Priority Alignment (SPA)}, a fully unsupervised framework that leverages self-evaluation, dual-objective filtering, and lexicographic preference learning to improve both trustworthiness and helpfulness without any human-labeled data. 3) We conduct extensive experiments across diverse high-stakes alignment settings, showing that \texttt{SPA} consistently improves helpfulness while maintaining strong safety guarantees, outperforming several supervised and unsupervised baselines.

\section{Formulating Priority Alignment as a Lexicographic Optimization Problem} \label{sec:Formulating}

Priority Alignment can be naturally framed as a lexicographic optimization problem, where multiple objectives are optimized according to a strict priority order \citep{isermann1982linear}, as shown below.

\begin{tcolorbox}[definitionbox]
\textbf{Remark (Formalizing Priority Alignment as Lexicographic Optimization)}  Let \( G_a(\theta) \) be the primary alignment metric (e.g., harmlessness), and \( G_b(\theta) \) be the secondary metric (e.g., helpfulness) to be optimized, both functions of the LLM parameters \(\theta\). The optimization proceeds as:
\[
\min_{\theta} G_a(\theta)
\]
subject to model feasibility constraints, followed by
\[
\min_{\theta} G_b(\theta) \quad \text{s.t.} \quad G_a(\theta) \leq G_a^*
\]
where \( G_a^* \) is the optimal or acceptable threshold for the primary objective.
\end{tcolorbox}
Under classical assumptions such as convexity, continuity, and non-empty feasible sets, this sequential optimization is well-defined. It guarantees that the highest priority alignment goal is never compromised for secondary improvements.
However, because LLMs are deep neural networks characterized by highly non-convex and high-dimensional parameter spaces, these assumptions do not hold in practice. Consequently, it is infeasible to first fully optimize $G_a$ (harmlessness) before optimizing $G_b$ (helpfulness) using traditional lexicographic methods. 

Our solution approximates \textbf{lexicographic optimization} by integrating \textbf{Pareto Front Enumeration} concepts with \textbf{Preference Optimization} (PO).  Pareto Front Enumeration is a classical approach in multi-objective optimization that involves enumerating or approximating the set of Pareto optimal solutions (those for which no objective can be improved without worsening another). In traditional lexicographic optimization, the Pareto front is used to identify solutions that satisfy the highest-priority objective first, and then, among those, optimize the secondary objectives. This sequential filtering ensures strict adherence to priority order but can be computationally expensive and infeasible for high-dimensional, non-convex problems like LLM fine-tuning.

Preference Optimization (PO) is a learning framework that trains LLMs based on pairwise preference data rather than explicit objective values  \citep{christiano2017deep, ouyang2022training}. By leveraging preference judgments (e.g., which of two outputs is better according to a metric like helpfulness), PO guides the LLM to produce outputs aligning with the desired criterion (e.g., harmlessness or helpfulness). Direct Preference Optimization (DPO) \citep{rafailov2023direct} is a recent instantiation of PO, which directly optimizes model parameters to maximize the likelihood of preferred outputs, enabling efficient and scalable training for alignment tasks.  SimPO \citep{meng2024simpo} further extends DPO to stabilize training and improve preference consistency.

\begin{figure*}
    \centering
    \includegraphics[width=\linewidth]{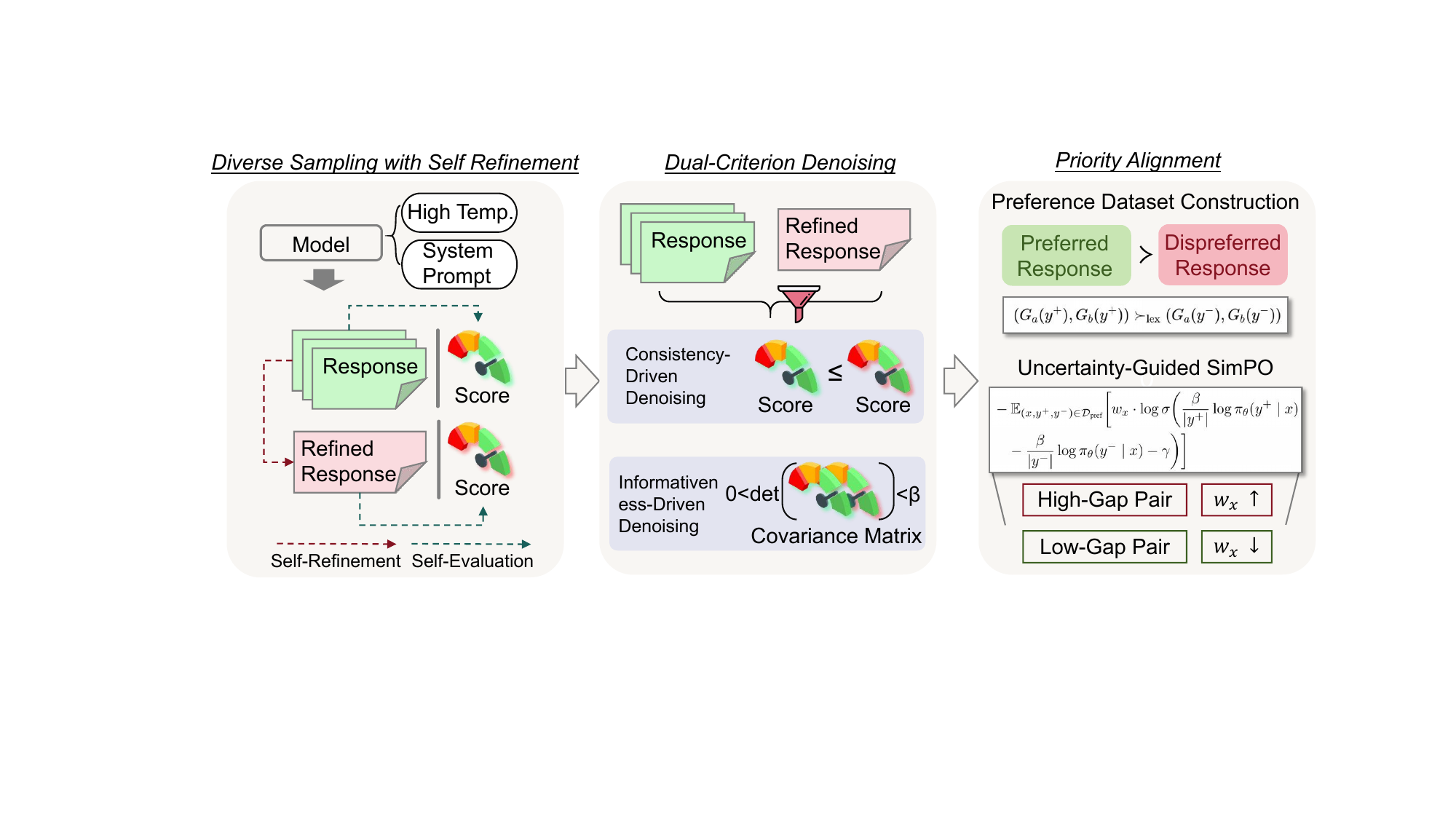}
    \caption{Overview of \texttt{SPA},  consisting of three components: diverse sampling with self-refinement, dual-criterion denoising, and priority alignment.}
    \label{fig:overview}
    \vspace{-15pt}
\end{figure*}

Intuitively, we find that the \textbf{pairwise preferences} used to align LLMs with respect to certain alignment metrics implicitly encode \textbf{Pareto dominance relations}. Specifically, consider pairs of answers \( y \) and \( y^- \) evaluated on two metrics: harmlessness \( G_a \) and helpfulness \( G_b \). Preference pairs
$G_a(y) \ge G_a(y^-)$   and $G_b(y) > G_b(y^-)$  define a partial ordering over the responses, indicating that answer \( y \) is preferred over \( y^- \) according to both metrics. This structure of  \textbf{pairwise preferences} corresponds closely to the notion of  \textbf{Pareto dominance}, where one solution ($y$) dominates another ($y^-$) if it is better or equal in all objectives (\( G_a, G_b \)) and strictly better in at least one $G_b$. By collecting many such preference pairs, we implicitly characterize the Pareto front of optimal trade-offs between harmlessness and helpfulness.
Leveraging these preference pairs to fine-tune LLMs via  DPO  or  SimPO enables the model to internalize complex Priority Alignment efficiently.

Our SPA framework is built on this formalized solution. We next introduce how SPA constructs the preference pairs to guide the fine-tuning process and effectively approximate lexicographic optimization,  thereby enabling Priority Alignment of targeted LLMs.

\section{\texttt{SPA}: Self-Priority Alignment}

Unlike most prior alignment methods, \texttt{SPA} requires no human-annotation data and operates in a fully unsupervised manner. It aligns LLMs with goals through self-guided generation, evaluation, and optimization, which has been demonstrated effective in many works on self-alignment \citep{sun2023principle, wu2024meta, kim2024spread}. As shown in Figure \ref{fig:overview}, it begins with diverse sampling and self-refinement, where the targeted model generates multiple responses per prompt, evaluates them under dual-alignment objectives, and produces a refined output. A dual-criterion denoising step filters unreliable or uninformative responses based on consistency and score variability. Finally, \texttt{SPA} constructs a preference dataset that implicitly encodes Pareto dominance relations between the primary and secondary objective and applies a weighted SimPO \citep{meng2024simpo} loss to optimize the model toward robust, priority-aligned behavior. All prompt templates used in \texttt{SPA} are shown in Appendix.

\subsection{Diverse Sampling with Self-Refinement}
\label{sec:diverse_sample}

\textbf{\ul{Step 1: Diverse Sampling.}}  
Given a dataset $\mathcal{D} = \{x_j\}_{j=1}^{m}$ of prompts and a language model $\pi_{\boldsymbol{\theta}}$, we generate $n$ diverse candidate responses $\{y_j^{(i)}\}_{i=1}^{n}$ for each $x_j$ using: \textbf{1) High-temperature sampling:} $y_j^{(i)} \sim \pi_{\boldsymbol{\theta}}(\cdot \mid x_j; \tau)$ to encourage variation; \textbf{2) Prompt variation:} using alternative system prompts as inspired by \citet{liu2025tisdpo}.

\textbf{\ul{Step 2: Self-Refinement.}}  
Each sampled response $y_j^{(i)}$ is self-scored based on the primary objective $G_a$ and secondary objective $G_b$:
\[
s_{a,j}^{(i)} = S_a(x_j, y_j^{(i)}), \quad s_{b,j}^{(i)} = S_b(x_j, y_j^{(i)}).
\]
Here, $S_a$ and $S_b$ are scoring functions derived from the AI constitution $\mathcal{C}$ (e.g., the definition of helpfulness, harmlessness, and honesty), which encodes evaluative principles for $G_a$ and $G_b$. Rather than refining responses individually, a single improved response $\tilde{y}_j$ is generated by incorporating all samples and their scores, as \(\tilde{y}_j \sim \pi_{\boldsymbol{\theta}}(\cdot \mid x_j, \{y_j^{(i)}, s_{a,j}^{(i)}, s_{b,j}^{(i)}\}_{i=1}^{n}, \mathcal{C}).\)
The refined response is then rescored as \(\tilde{s}_{a,j} = S_a(x_j, \tilde{y}_j), \quad \tilde{s}_{b,j} = S_b(x_j, \tilde{y}_j)\).

We define the response set as \(\mathcal{Y}_j = \{y_j^{(i)}\}_{i=1}^{n} \cup \{\tilde{y}_j\}\), with each $y \in \mathcal{Y}_j$ associated with score pair $(s_{a,j}(y), s_{b,j}(y))$.

\subsection{Dual-Criterion Denoising}

Although Diverse Sampling with Self-Refinement yields a set of scored responses for each prompt, directly using these scores to construct preference data may be problematic. The self-evaluation and refinement process-especially when performed by a weak model-can introduce bias, inconsistency, and noise into the preference signals, potentially leading to unreliable or even misleading supervision \citep{ye2024justice}.

To mitigate these issues, we propose \textbf{Dual-Criterion Denoising}, a two-stage filtering strategy designed to select more trustworthy supervision data before preference construction. This approach consists of \textit{Consistency-Driven Denoising} and \textit{Informativeness-Driven Denoising}.

\textbf{\ul{Consistency-Driven Denoising}} aims to retain only those responses that exhibit stable and superior performance. The motivation is that if the refined response fails to outperform all sampled candidates along both evaluation dimensions, it signals potential instability or unreliability in the model's self-assessment for that prompt. Specifically, we preserve only responses where the refined version strictly surpasses all candidates on both axes:
\(
\mathcal{Y}_{\mathrm{perf}} = \{(y_j^{(i)}, s_{a,j}^{(i)}, s_{b,j}^{(i)}) \in \mathcal{Y} \mid \tilde{s}_{a,j} > \max_i s_{a,j}^{(i)} \text{ and } \tilde{s}_{b,j} > \max_i s_{b,j}^{(i)}\}.
\)
If $\mathcal{Y}_{\mathrm{perf}}$ is empty, the refined response is discarded.

While consistency filtering addresses internal disagreement, it does not guarantee that the retained samples are truly informative or robust. Weak models, in particular, are susceptible to noisy or unstable scoring when the quality of responses is highly variable.

To further investigate this, we analyze the alignment between a weak model and a strong model using the RV coefficient \citep{escoufier1973traitement}. Figure~\ref{fig:rv_curve} shows the RV coefficient between Mistral-7B-Instruct (weak) and GPT-4o (strong) across a subset of 400 WildGuard samples, as samples are included in order of increasing score covariance (as shown in Equation \ref{eq:variance}). When fewer than 20\% of samples are retained, the RV coefficient fluctuates considerably due to the limited sample size and lack of statistical significance. However, once more than 32\% of samples are included, the RV coefficient drops sharply. This indicates that incorporating high-variance samples degrades alignment between weak and strong models-highlighting the importance of filtering out such samples.

\begin{wrapfigure}{l}{0.5\textwidth}
    \centering
    \vspace{-10pt}
    \includegraphics[width=0.95\linewidth]{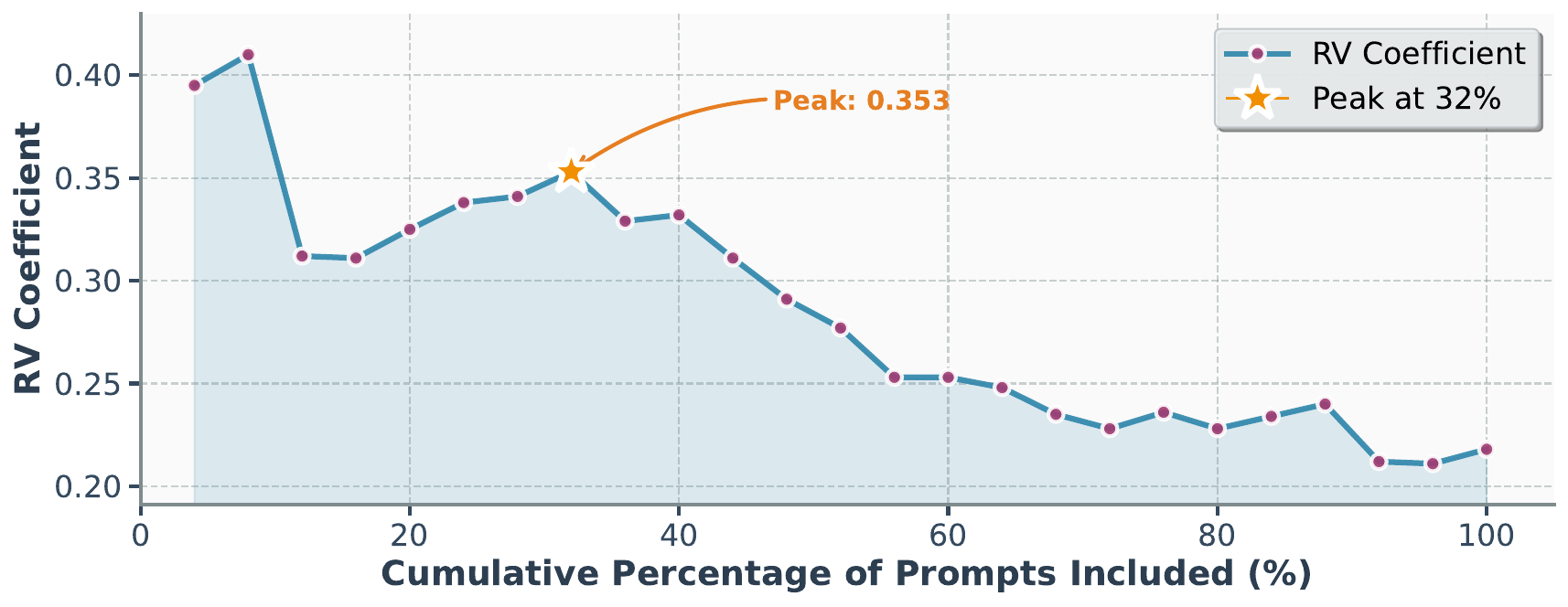}
    \caption{Effect of sample score variance (from low to high) on weak-strong model alignment (RV coefficient).}
    \label{fig:rv_curve}
    \vspace{-10pt}
\end{wrapfigure}

Motivated by this observation, we introduce \textbf{\ul{Informativeness-Driven Denoising}}. For each prompt, we compute the covariance matrix of the sampled scores:
\begin{equation}
\label{eq:variance}
\boldsymbol{\Sigma}_j =
\begin{bmatrix}
\mathrm{Var}(s_{a,j}) & \mathrm{Cov}(s_{a,j}, s_{b,j}) \\[4pt]
\mathrm{Cov}(s_{b,j}, s_{a,j}) & \mathrm{Var}(s_{b,j})
\end{bmatrix}.
\end{equation}
We retain responses only if their score variance is within an acceptable range, specifically:
\[
\mathcal{Y}_{\mathrm{final}} = \{(y_j^{(i)}, s_{a,j}^{(i)}, s_{b,j}^{(i)}) \in \mathcal{Y}_{\mathrm{perf}} \mid 0 < \det(\boldsymbol{\Sigma}_j) \leq \tau\}.
\]
If $\mathcal{Y}_{\mathrm{final}}$ is empty, it indicates that the responses are either too unstable (\(\det(\boldsymbol{\Sigma}_j) > \tau\)) or insufficiently informative (\(\det(\boldsymbol{\Sigma}_j) = 0\)).

\subsection{Construction of Preference Dataset} 

Given the filtered response set $\mathcal{Y}_x$  for each prompt $x$, we construct the preference pairs that implicitly encode lexicographic order between the primary $G_a$ and secondary objective $G_b$. Specifically, each response $y\in \mathcal{Y}_x$ is   assigned a two-dimensional score vector $(G_a(y), G_b(y))$. The score does come from the self-evaluation in Section \ref{sec:diverse_sample}.

To construct the dataset $\mathcal{D}_{\mathrm{pref}}$ of preference pairs (later used by Preference Optimization for LLM fine-tuning), we select pairs of responses from \( \mathcal{Y}_x \) that satisfy the lexicographic order between the primary objective \( G_a \) and the secondary objective \( G_b \), i.e., the response pair $(y,y^-)$ is selected for $\mathcal{D}_{\mathrm{pref}}$  if 
\(G_a(y) > G_a(y^-) \quad \text{or} \quad \left( G_a(y) = G_a(y^-) \ \text{and} \ G_b(y) > G_b(y^-) \right)\).

Additionally, we impose a margin $\delta > 0$ to ensure meaningful differences, such that the total score difference 
$\Delta(y,y^-) = \lvert G_a(y) - G_a(y^-)\rvert + \lvert G_b(y) - G_b(y^-)\rvert \geq \delta$.
Thus, the set of valid preference pairs is defined as:
\begin{equation}
\begin{aligned}
\mathcal{D}_{\mathrm{pref}} = \{(x,\,y,\,y^-):\;y, y^- \in \mathcal{Y}_x,\; (G_a(y), G_b(y)) >_{\mathrm{lex}} (G_a(y^-), G_b(y^-)),  \Delta(y, y^-) \geq \delta \}.
\end{aligned}
\end{equation}

\subsection{Preference Optimization for Priority Alignment}

The priority alignment is to optimize the policy $\pi_{\boldsymbol{\theta}}$ under the lexicographic priority $G_a(\boldsymbol{\theta}) \succ G_b(\boldsymbol{\theta})$. As discussed in Section \ref{sec:Formulating}, 
the above-constructed preference pairs implicitly characterize the Pareto front of optimal trade-offs between $G_a(\boldsymbol{\theta})$ and $G_b(\boldsymbol{\theta})$ under the lexicographic order. 
Leveraging these preference pairs via PO enables the optimization of  $\pi_{\boldsymbol{\theta}}$ for the goal of priority alignment.

PO has recently gained huge traction as a principled framework for LLM alignments \citep{christiano2017deep, ouyang2022training}. Several variants of PO have been proposed, such as DPO \citep{rafailov2023direct} and SimPO \citep{meng2024simpo}. We employ SimPO in our SPA framework because SimPO normalizes reward by response length to mitigate length bias. Without normalization, models favor unnecessarily long outputs. Importantly, this may distort the model's understanding of \textbf{\textit{helpfulness}}, equating it with length rather than substance.

\textbf{Uncertainty-Guided SimPO.} 
Inspired by the previous study \citep{zhou2024wpo}, given the uncertainty in self-generated samples, we emphasize pairs with lower uncertainty and significant score differences. Let $\Delta_i$ denote the absolute total score difference between the preferred ($y$) and not-preferred ($y^-$)  responses for the $i$-th pair:
$\Delta_i = \Bigl|\,G_a(y) + G_b(y) - G_a(y^-) - G_b(y^-)\Bigr|$.
Let $\overline{\Delta}$ be the mean of all $\Delta_i$ within the current batch, and define the pairwise weight as $w_i = \left(\frac{\Delta_i}{\overline{\Delta}}\right)^\alpha$, with $\alpha>0$ as a hyperparameter. Derived from SimPO, the alignment loss function used in \texttt{SPA} is then given by
\begin{equation} \label{eq:loss}
\begin{aligned}
\mathcal{L}_{\mathrm{SPA}}(\boldsymbol{\theta}) = -\,\mathbb{E}_{(x,y,y^-)\in \mathcal{D}_{\mathrm{pref}}}
\Biggl[
w_i \cdot \log \sigma\Biggl(
\frac{\beta}{\lvert y\rvert}\,\log \pi_{\boldsymbol{\theta}}(y \mid x)
 - \frac{\beta}{\lvert y^-\rvert}\,\log \pi_{\boldsymbol{\theta}}(y^- \mid x)
- \gamma
\Biggr)
\Biggr].
\end{aligned}
\end{equation}
By weighting each pairwise term by $w_i$, pairs with larger score gaps $\Delta_i$ exert a stronger influence on the gradient, thereby encouraging the policy to more decisively distinguish between responses with significant alignment differences.

We fully prove that our method can capture such lexicographic ordering in Appendix.

\begin{table*}[]
\centering
\renewcommand{\arraystretch}{1}
\caption{Results of \texttt{SPA} compared to the original model (i.e., Vanilla) and the model enhanced by Supervised Fine-Tuning (SFT).  The best performances are highlighted in \textbf{\ul{bold and underlined}}. } 
\label{tab:main_res_score}
\begin{tabular}{lcccccc}
\toprule[1pt]
\multicolumn{1}{c}{}                                  & \multicolumn{6}{c}{{ \textbf{Llama-3.1-8B-Instruct}}}                                                             \\
\cmidrule(lr){2-7}
\multicolumn{1}{c}{}                                  & \multicolumn{2}{c}{\textbf{SafeRLHF}}        & \multicolumn{2}{c}{\textbf{WildGuard}}       & \multicolumn{2}{c}{\textbf{HoneSet}}    \\
\cmidrule(lr){2-3} \cmidrule(lr){4-5} \cmidrule(lr){6-7}
\multicolumn{1}{c}{\multirow{-3}{*}{\textbf{Method}}} & \textbf{Harmlessness} & \textbf{Helpfulness} & \textbf{Harmlessness} & \textbf{Helpfulness} & \textbf{Honesty} & \textbf{Helpfulness} \\
\midrule
\textbf{Vanilla}                                              & 9.62                  & 5.23                 & 8.22                  & 6.09                 & 6.30              & 7.75                 \\
\textbf{SFT}                                                   & 9.68                  & 5.57                 & 9.79                  & 3.20                 & 6.11             & 7.66                 \\
\midrule
\textbf{SPA$_{\text{DPO}}$}                                            & \textbf{\ul{9.96}}                  & 5.80                 & 8.35                  & 5.93                 & 6.36             & 7.81                 \\
\textbf{SPA$_{\text{SimPO}}$}                                            & 9.87                  & 6.98                 & \textbf{\ul{8.92}}                  & 5.45                 & 7.74             & 7.72                 \\
\textbf{SPA}                                                   & 9.90                  & \textbf{\ul{7.14}}                 & 8.85                  & \textbf{\ul{6.22}}                 & \textbf{\ul{7.75}}             & \textbf{\ul{7.83}}                 \\
\midrule
\multicolumn{1}{c}{}                                  & \multicolumn{6}{c}{{ \textbf{Mistral-7B-Instruct}}}                                                               \\
\cmidrule(lr){2-7}
\multicolumn{1}{c}{}                                  & \multicolumn{2}{c}{\textbf{SafeRLHF}}        & \multicolumn{2}{c}{\textbf{WildGuard}}       & \multicolumn{2}{c}{\textbf{HoneSet}}    \\
\cmidrule(lr){2-3} \cmidrule(lr){4-5} \cmidrule(lr){6-7}
\multicolumn{1}{c}{\multirow{-3}{*}{\textbf{Method}}} & \textbf{Harmlessness} & \textbf{Helpfulness} & \textbf{Harmlessness} & \textbf{Helpfulness} & \textbf{Honesty} & \textbf{Helpfulness} \\
\midrule
\textbf{Vanilla}                                                & 8.83                  & 7.53                 & 6.83                  & 7.15                 & 5.81             & 7.62                 \\
\textbf{SFT}                                                   & 8.59                  & 7.54                 & 6.64                  & 6.88                 & 5.85             & 7.66                 \\
\midrule
\textbf{SPA$_{\text{DPO}}$}                                             & 9.06                  & 8.07                 & 6.93                  & 7.16                 & 5.72             & 7.62                 \\
\textbf{SPA$_{\text{SimPO}}$}                                            & 9.72                  & 8.36                 & 7.19                  & 7.40                 & 7.16             & 7.77                 \\
\textbf{SPA}                                                   & \textbf{\ul{9.76}}                  & \textbf{\ul{8.39}}                 & \textbf{\ul{7.27}}                  & \textbf{\ul{7.44}}                 & \textbf{\ul{7.18}}             & \textbf{\ul{7.82}}        \\
\bottomrule[1pt]
\end{tabular}
\vspace{-15pt}
\end{table*}

\section{Experiments}

\subsection{Experiment Setup}

\textbf{Datasets.} We use SafeRLHF \citep{ji2024beavertails, ji2024pku} (PKU-SafeRLHF) and WildGuard \citep{han2024wildguard} for evaluating the priority alignment of harmlessness and helpfulness while using HoneSet \citep{gao2024honestllm} for evaluating that of honesty and helpfulness.
In addition, when \texttt{SPA} employs SafeRLHF for training, we further assess the generalization ability of the aligned model on unseen datasets: JailbreakTrigger \citep{huang2024position}.

\textbf{Evaluations.} Our primary evaluation methodology combines LLM-as-a-Judge \citep{zheng2023judging} with human validation. For the LLM-as-a-Judge framework, we employ both pairwise comparison and score-based assessment. The judge models used are GPT-4o \citep{openai2024gpt4o} and Claude 3.5 Sonnet \citep{anthropic2024claude35sonnet}. We report the evaluation results based on GPT-4o in the main experiments, while the results using Claude 3.5 Sonnet are provided in Appendix. Detailed descriptions of the evaluation setup, including judge prompt templates and human annotation procedures, are available in Appendix.

\textbf{Models \& Baselines \& Hyperparameters.} LLama-3.1-8B-Instruct \citep{meta2024llama3.1} and Mistral-7B-Instruct \citep{mistral7b_blog} are tuned under the framework of \texttt{SPA} in our experiments. They have been widely adopted in prior work \citep{xiao2025simper, meng2024simpo}; since SPA is an unsupervised method, we prefer models that already exhibit a certain level of alignment capability (i.e., instruct version instead of base version). As there are no direct comparable baselines regarding solving lexicographic optimization, we select some methods that are widely used in multi-objective alignment and unsupervised self-alignment:
1) \textbf{Reward Soups \citep{rame2023rewarded}} linearly combines models fine-tuned on different reward functions to achieve Pareto-optimal generalization across diverse alignment objectives.
During training, we set different ratios \( a:b \) for the harmlessness versus helpfulness objectives to control their relative importance in the composite reward function, shown as \textbf{RS}$_{a:b}$ in Table \ref{tab:baseline_comparison}.
2) \textbf{Self-Criticism \citep{tan2023self}} aligns LLMs to HHH principles (harmlessness, honesty, and helpfulness) by letting them evaluate and improve their responses through in-context learning and self-generated supervision-without relying on costly human-labeled rewards. Moreover, we include other variant baselines based on \texttt{SPA}. \textbf{SFT} leverages only the preferred samples in preference pairs for conducting supervised fine-tuning. 
By default, \texttt{SPA} employs the loss function Equation \ref{eq:loss} for alignment. This loss can be substituted with standard SimPO (i.e., \textbf{SPA$_{\text{SimPO}}$}) or DPO (i.e., \textbf{SPA$_{\text{DPO}}$}) objectives to evaluate the impact of different preference optimization strategies on Priority Alignment.
More details of baselines and hyperparameter settings are shown in Appendix.

\subsection{Main Results}

\begin{figure*}[t]
    \centering
    \includegraphics[width=\linewidth]{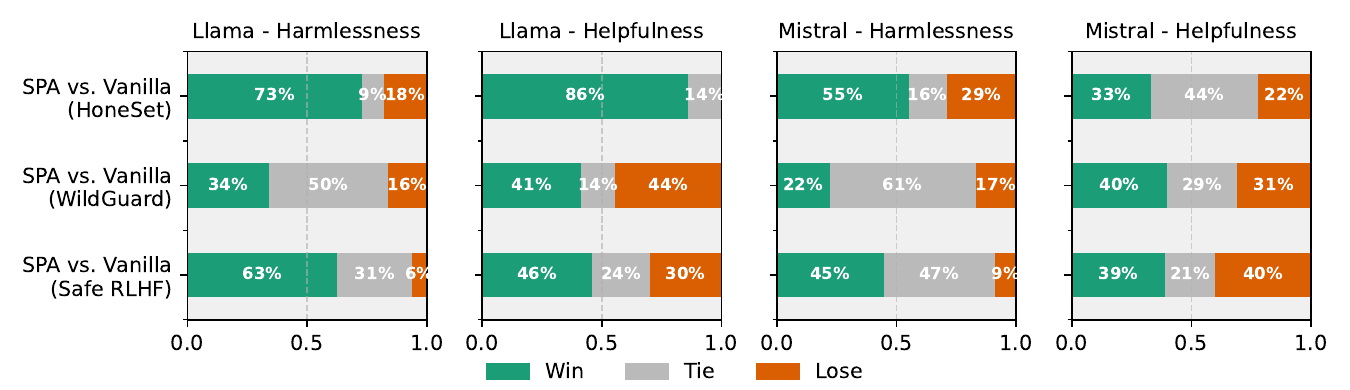}
    \caption{Results of pairwise comparison on different datasets. We use GPT-4o as the judge model.}
    \label{fig:main_res_pairwise}
    \vspace{-15pt}
\end{figure*}

We show the score-based evaluation on Table \ref{tab:main_res_score}, pairwise comparison evaluation on Figure \ref{fig:main_res_pairwise}, and baseline comparison on Table \ref{tab:baseline_comparison}. To explore whether \texttt{SPA}  harms the general utility of the model after alignment, we conduct experiments on MTBench \citep{zheng2023judging} and MMLU \citep{hendrycks2020measuring}, as shown in Table \ref{tab:utility}.

\textbf{\ul{SPA improves alignment across all metrics.}}
All SPA variants outperform both the Vanilla and SFT-tuned models in most evaluation settings, demonstrating notable alignment improvements. As shown in Table~\ref{tab:main_res_score}, the full SPA model achieves the best results on Mistral-7B-Instruct across all metrics, with especially large gains on SafeRLHF and WildGuard. SPA also maintains strong performance on Llama-3.1-8B-Instruct, ranking among the top models. Figure~\ref{fig:main_res_pairwise} further shows SPA's higher win rates, including 86\% on HoneSet helpfulness, highlighting the effectiveness of our alignment strategy.

\textbf{\ul{Joint modeling of pairwise uncertainty further improves alignment.}} As shown in Table \ref{tab:main_res_score}, the full SPA, which incorporates both SimPO normalization and uncertainty-aware weighting, consistently achieves the best trade-off across objectives. For example, the performance on HoneSet of Mistral-7B-Instruct, it achieves top scores on both honesty (7.18) and helpfulness (7.82).  

\begin{table*}[t]
\centering
\renewcommand{\arraystretch}{1.}
\caption{The results of utility comparison on MTBench and MMLU. 
}
\label{tab:utility}
\scalebox{1}{
\begin{tabular}{ccccccccc}
\toprule[1pt]
\multirow{3}{*}{\textbf{Method}} & \multicolumn{4}{c}{\textbf{Llama-3.1-8B-Instruct}}                             & \multicolumn{4}{c}{\textbf{Mistral-7B-Instruct}}                               \\
\cmidrule(lr){2-5} \cmidrule(lr){6-9}
                                 & \multicolumn{2}{c}{\textbf{+ SafeRLHF}} & \multicolumn{2}{c}{\textbf{+ WildGuard}} & \multicolumn{2}{c}{\textbf{+ SafeRLHF}} & \multicolumn{2}{c}{\textbf{+ WildGuard}} \\
                                 \cmidrule(lr){2-3} \cmidrule(lr){4-5} \cmidrule(lr){6-7} \cmidrule(lr){8-9}

                                 & \textbf{MTBench}    & \textbf{MMLU}   & \textbf{MTBench}    & \textbf{MMLU}    & \textbf{MTBench}    & \textbf{MMLU}   & \textbf{MTBench}    & \textbf{MMLU}    \\
                                                                  \midrule
\textbf{Vanilla}                & 8.025               & 0.714           & 8.025               & 0.714            & 7.413               & 0.594           & 7.413               & 0.594            \\
\textbf{SPA}                     & 8.075               & 0.702           & 8.013               & 0.730             & 7.450                & 0.584           & 7.600                 & 0.584           \\
\bottomrule[1pt]
\end{tabular}}

\end{table*}

\textbf{\ul{SPA consistently outperforms all other  multi-objective alignment baselines.}} 
The comparison of SPA and two other baselines in terms of  harmlessness and helpfulness is presented in the first two columns of Table \ref{tab:baseline_comparison}.  
To further compare their overall alignment quality with a single aggregated score, 
we compute a weighted metric \( \text{HH}_\lambda = (\lambda S_{\text{harm}} + S_{\text{help}}) / (\lambda + 1) \), where \( \lambda \in \{5, 10, 20\} \) controls the relative importance of harmlessness versus helpfulness. Increasing \( \lambda \) reflects the higher priority of harmlessness, as it is the primary alignment objective in our Priority Alignment framework.
As shown in Table \ref{tab:baseline_comparison}, except for the pure helpfulness metric,  where SPA slightly underperforms compared to the Self-Criticism,  SPA achieves superior results across all other evaluation settings. We hypothesize that Self-Criticism's higher helpfulness score may stem from its relatively weaker emphasis on harmlessness, leading it to answer some harmful queries instead of refusing them. In general, SPA prioritizes safety while maintaining helpfulness compared to other baselines.

\begin{table*}[ht]
\centering
\renewcommand{\arraystretch}{1}

\begin{minipage}{0.48\textwidth}
\centering
\rowcolors{2}{brown!10}{white}
\caption{SPA vs Self-Criticism (Self-Cri.) and Reward Soups (\textbf{RS}$_{a:b}$), evaluated on Llama-8B-Instruct (SafeRLHF), on Harmless, Helpfulness, and their combination with different $\lambda$.}
\label{tab:baseline_comparison}
\scalebox{0.95}{
\begin{tabular}{lccccc}
\toprule[1pt]
\textbf{Baseline} & \textbf{Har.} & \textbf{Help.} & \textbf{HH$_{5}$} & \textbf{HH$_{10}$} & \textbf{HH$_{20}$} \\
\midrule
Self-Cri. & 9.65 & \textbf{7.68} & 9.32 & 9.47 & 9.56 \\
RS$_{6:4}$ & 9.87 & 6.14 & 9.25 & 9.53 & 9.69 \\
RS$_{7:3}$ & 9.80 & 5.94 & 9.16 & 9.45 & 9.62 \\
RS$_{8:2}$ & 9.30 & 6.85 & 8.89 & 9.08 & 9.18 \\
RS$_{9:1}$ & \textbf{9.90} & 6.17 & 9.28 & 9.56 & 9.72 \\
\midrule
SPA & \textbf{9.90} & 7.14 & \textbf{9.44} & \textbf{9.65} & \textbf{9.77} \\
\bottomrule[1pt]
\end{tabular}}
\end{minipage}
\hfill
\begin{minipage}{0.48\textwidth}
\rowcolors{2}{brown!10}{white}
\centering
\caption{Generalization performance of SPA on two datasets. Llama-8B-Instruct is trained on the SafeRLHF (Harm.: Harmless, Help.: Helpfulness).}
\label{tab:generalization}
\scalebox{0.95}{
\begin{tabular}{lcccc}
\toprule[1pt]
\multirow{2}{*}{\textbf{Method}} & \multicolumn{2}{c}{\textbf{JailbreakTrigger}} & \multicolumn{2}{c}{\textbf{WildGuard}} \\
\cmidrule(lr){2-3} \cmidrule(lr){4-5}
 & \textbf{Harm.} & \textbf{Help.} & \textbf{Harm.} & \textbf{Help.} \\
\midrule
Vanilla & 9.07 & 4.99 & 8.22 & 6.11 \\
SFT & 8.91 & 5.23 & 8.33 & 6.08 \\
SPA$_{\mathrm{DPO}}$ & 9.81 & 6.44 & 9.57 & 6.25 \\
SPA$_{\mathrm{SimPO}}$ & 9.61 & 6.23 & 9.09 & 5.45 \\
\midrule
SPA & 9.80 & 6.35 & 9.29 & 5.26 \\
\bottomrule[1pt]
\end{tabular}}
\end{minipage}

\end{table*}

\textbf{\ul{SPA preserves general utility performance.}}  
To assess whether SPA impacts the model's general capabilities, we evaluate the utility of aligned models on MTBench \citep{zheng2023judging} and MMLU \citep{hendrycks2020measuring}. For the evaluation of MTBench, we follow the way proposed by Zheng et al. \citep{zheng2023judging}. The MMLU evaluation metric is based on accuracy (0 to 1) and is implemented by comparing the model response with the ground-truth answer via LLM-as-a-Judge. As shown in Table \ref{tab:utility}, SPA achieves improved performance across most configurations. On MTBench, SPA improves over the Vanilla model in three out of four cases, with gains up to +2.52\% (Mistral-7B-Instruct + WildGuard). On MMLU, the accuracy differences are minimal, with mixed fluctuations around $\pm$2\%. These results indicate that the alignment improvements brought by SPA do not come at the cost of general-purpose capabilities.

Moreover, we study the generalization ability of \texttt{SPA}, the impact of iteration counts, and the ablation study about the effectiveness of the denoising step. Moreover, we also analyze its sensitivity to the number of training samples in the Appendix.

\begin{figure}[h]
    \centering
    \includegraphics[width=0.8\linewidth]{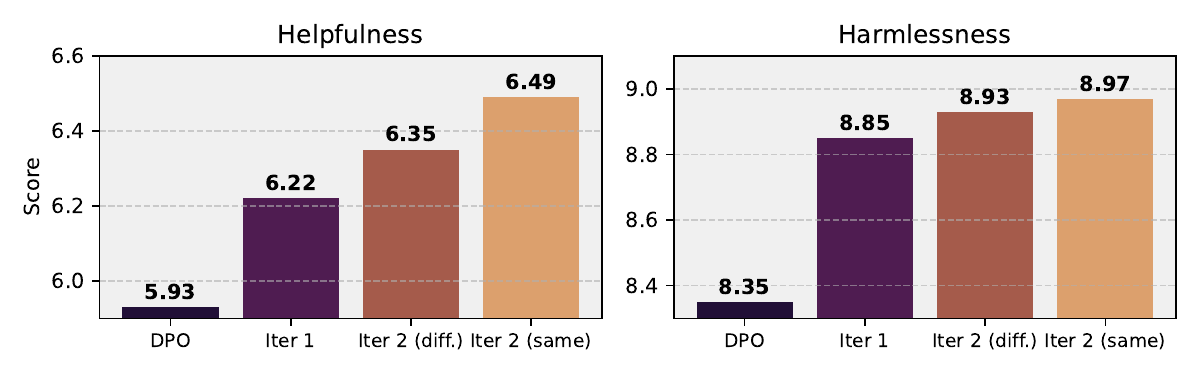}
    \caption{Effect of multiple SPA iterations on WildGuard using LLaMA-3.1-8B-Instruct. ``Iter 2 (diff.)'' uses a new dataset in the second iteration, while ``Iter 2 (same)'' reuses the original data.}
    \label{fig:iteration}
\end{figure}

\textbf{How well does SPA generalize across different datasets?} To assess the generalization ability of \texttt{SPA}, we evaluate models trained on SafeRLHF directly on two unseen datasets: JailbreakTrigger and WildGuard. As shown in Table~\ref{tab:generalization}, \texttt{SPA} demonstrates consistently strong and balanced performance across both datasets. On JailbreakTrigger, it achieves a harmlessness score of 9.80 and a helpfulness score of 6.35, clearly outperforming the Vanilla and SFT baselines and matching the best harmlessness scores among all variants. On WildGuard, \texttt{SPA} attains a harmlessness score of 9.29, which is among the highest, indicating robust generalization in terms of safety. While its helpfulness on WildGuard is slightly lower than some variants like \textbf{SPA$_{\text{DPO}}$}, it still maintains a strong overall trade-off between harmlessness and helpfulness. These results highlight that \texttt{SPA}, despite being trained only on SafeRLHF, generalizes effectively to diverse safety-critical scenarios.

\begin{wrapfigure}{r}{0.45\textwidth}
    \centering
    \vspace{-10pt}
    \includegraphics[width=0.95\linewidth]{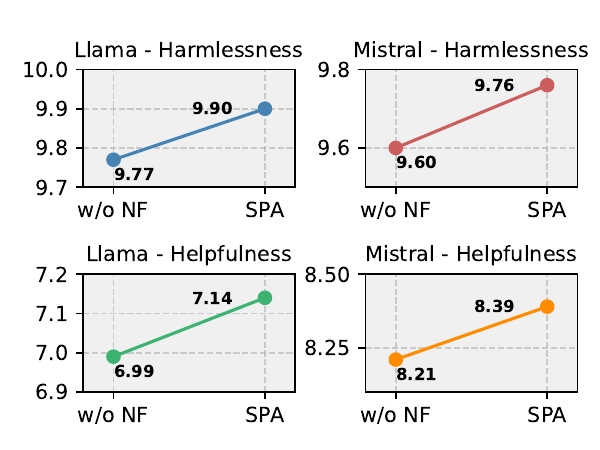}
    \caption{Ablation study of the denoising in the SafeRLHF dataset. \textit{w/o} NF means the results without the denoising (i.e., noise filtering) component.}
    \label{fig:ablation}
    \vspace{-10pt}
\end{wrapfigure}

\textbf{What is the impact of increasing the number of SPA iterations on performance?}
To assess the effect of iteration count in \texttt{SPA}, we evaluate two second-iteration strategies: using new, unseen prompts (\textit{Iter 2 (diff.)}) or reusing the same prompts with refined model outputs (\textit{Iter 2 (same)}). Experiments are conducted on Llama-3.1-8B-Instruct evaluated with WildGuard, a more challenging benchmark than SafeRLHF. As shown in Figure~\ref{fig:iteration}, both strategies improve upon the single-iteration baseline, confirming the benefit of iterative refinement. Notably, reusing the same prompts yields better results-especially on the \textit{helpfulness} metric (6.49 vs.~6.35)-demonstrating that refining responses on the same context strengthens alignment more effectively. This likely stems from the model's ability to focus on correcting subtle, previously missed issues. In contrast, new prompts increase breadth but reduce iteration depth within any given context. Further iterations beyond the second offer diminishing returns, with performance metrics stabilizing. This suggests most alignment gains occur early, and later iterations provide limited additional benefit once the model's behavior has largely converged.

\textbf{How effective is the denoising component within SPA?}  
We perform an ablation study on the SafeRLHF to assess the contribution of the denoising component in \texttt{SPA}. As shown in Figure \ref{fig:ablation}, removing denoising leads to noticeable drops in both helpfulness and harmlessness, with decreases exceeding 0.1 in all cases. These results highlight the importance of incorporating the denoising step into \texttt{SPA} to ensure more significant improvements.

\section{Related Work: Alignment in LLMs}

Alignment ensures that LLMs act in line with human values, intentions, and safety goals \citep{ji2023ai}. Several algorithms address this: PPO uses reinforcement learning with human feedback (RLHF) \citep{schulman2017proximal, ouyang2022training}, while DPO directly optimizes preferences without reward models \citep{rafailov2023direct}. RRHF achieves PPO-level performance with simpler ranking-based training \citep{yuan2023rrhf}. IPO offers a general preference-learning objective, avoiding reward modeling and pointwise approximations, with strong theoretical and empirical results \citep{azar2024general}. KTO models human utility via prospect theory, using binary feedback to outperform standard methods \citep{ethayarajh2024kto}. SimPO enhances DPO with implicit rewards and margins, achieving state-of-the-art results without a reference model \citep{meng2024simpo}. Some studies also enhance alignment from the input prompt perspective \citep{trivedi2025align, cheng2023black}. Recent methods also tackle multi-objective alignment \citep{mukherjee2024multi, yangmetaaligner, wang-etal-2024-arithmetic, yangrewards, zhou2023beyond, kim2025safedpo, gupta2025robust}. MetaAligner enables flexible, plug-and-play multi-objective alignment \citep{yangmetaaligner}, and Rewards-in-Context (RiC) uses reward prompts and supervised fine-tuning to efficiently approximate Pareto-optimality \citep{yangrewards}.

\section{Conclusion}

We present SPA, an unsupervised framework that aligns LLMs by enforcing a strict trustworthy-before-helpfulness priority. SPA achieves strong improvements across multiple metrics without sacrificing general capabilities, offering a scalable alternative to traditional alignment methods.

\section*{Broader Impact}
\label{app:broader}

While \texttt{SPA} is designed with high-stakes scenarios in mind-where safety must take precedence-its core principle of \textit{priority alignment} is broadly applicable. Many alignment settings involve conflicting objectives (e.g., coherence vs. creativity, efficiency vs. completeness) that cannot be adequately addressed by simple weight tuning. \texttt{SPA}'s lexicographic formulation provides a principled mechanism to enforce objective hierarchies, ensuring that critical properties are satisfied before secondary goals are pursued. This makes SPA a promising foundation for broader domains such as long-form generation \citep{han2023lm}, and tool-augmented reasoning \citep{wu2025agentic}, where structured alignment priorities are essential for robust and controllable behavior.

Beyond its immediate technical contributions, \texttt{SPA} may also influence future research in multi-objective optimization, value learning, and safe AI governance by providing a template for prioritizing alignment objectives in a structured and theoretically grounded way.

\section*{Acknowledgment}

This work is supported by the National Science Foundation (No: 2333795). We thank Yanbo Wang, Zixiang Xu, and Haomin Zhuang for their feedbacks on this work.

\bibliography{references}
\bibliographystyle{icml2024}

\clearpage
\appendix
\section{Details of Experiment Setting}
\label{app:exp_details}

\textbf{Datasets.} For all training datasets, we randomly sampled a fixed number of original prompts: 300 for SafeRLHF and WildGuard, and 400 for HoneSet. The higher number for HoneSet is due to its lower conversion rate from honest prompts to preference pairs. For evaluation, we standardized the number of test prompts to 500 across all datasets. For the WildGuard dataset, we follow the train-test split in itself. 

\textbf{Details of Baselines.} For the Reward Soups method, we adopt an unsupervised data generation strategy to ensure a fair comparison. Specifically, for each input query, the model is prompted to generate $n$ candidate responses. Each response is then independently scored along two dimensions: harmlessness and helpfulness. To compute a final score for each response, we apply weighted combinations of the two scores using different ratios (6:4, 7:3, 8:2, and 9:1), reflecting varying emphases on harmlessness. Positive and negative training pairs for alignment are subsequently constructed by comparing these weighted scores across candidate responses. For Self-Criticism, we directly employ its pipeline on the dataset as it's an unsupervised method. For both Reward Soups and Self-Criticism, we use proposed Uncertainty-Guided SimPO to fine-tune the models.

\textbf{Training Details.} The training framework is based on the alignment-handbook repo\footnote{\url{https://github.com/huggingface/alignment-handbook}}. We use the setting of full fine-tuning for models in all baselines.

\textbf{Hyperparameter Setting.} The hyperparameter search space of our experiments is shown in \autoref{tab:hyperparameters}. For temperature settings, we use a value of 1 for diverse sampling, while all other cases (e.g., evaluation and judgment) are set to 0. The max length of model response is set to 2048 tokens. The $\beta$ and $\gamma$ for \texttt{SPA} loss follow the default setting of SimPO\footnote{\url{https://github.com/princeton-nlp/SimPO}}. 

\textbf{Computing Resource.} All the training experiments in this paper were conducted on 8 $\times$ NVIDIA H100 (80G).

\vspace{-10pt}
\begin{table}[h]
    \centering
    \renewcommand{\arraystretch}{1.2}
    \caption{Hyperparameter search space in the experiments.}
    \begin{tabular}{cc}
    \toprule[1pt]
        \textbf{Hyperparameter} & \textbf{Search Space} \\
        \midrule
        Learning rate & [5.0e-7, 1.0e-6, 5.0e-6, 1.0e-5]\\
        $\alpha$ for pairwise weight & [1, 2, 3, 4] \\
        $\rho$ for denoising & [10, 20, 30, 40, 50] \\
        $\delta$ for preference dataset construction & [2, 3, 4] \\
        number of diverse sampling $n$ & 3, 5 \\
    \bottomrule[1pt]
    \end{tabular}
    \label{tab:hyperparameters}
    \vspace{-10pt}
\end{table}

\section{Method Details}

\textbf{Details about the setting of temperature and system prompt.} For the harmlessness-helpfulness pairs, we employ high-temperature sampling; for the honesty-helpfulness pairs, we found that high-temperature sampling struggles to produce highly differentiated responses, so we instead modify the system prompt to achieve greater variance (i.e., for each response, there is a 0.5 probability of substituting the system prompt). Based on the previous study \citep{gao2024honestllm}, the new system prompt is as follows:

\textit{You are an honest AI assistant. Always make clear when you can't access real-time information, detect mistakes or missing details in user input, acknowledge that you have no identity, memory, or emotions, and admit when tasks require images, audio, or expertise beyond your capability. Be transparent that you cannot interact with or perceive the physical world.}

\section{Human Evaluation}
\label{app:human_eval}

For the human evaluation, a total of two Ph.D students and two undergraduate students participated in the assessment (they all major in Computer Science and have a background in LLM research). Screenshots of the evaluation interface are shown in Figure \ref{fig:screenshot}. We conducted pairwise comparisons of model responses, then we assessed the accuracy of LLM-based judges by comparing their decisions with human annotations. The alignment rate is reported in Table \ref{tab:human_eval}.

As shown in Table \ref{tab:human_eval}, GPT-4o demonstrates a high degree of consistency with human judgments across all evaluation dimensions. Specifically, when evaluating the outputs of Llama-3.1-8B-Instruct and Mistral-7B-Instruct, the agreement rates between GPT-4o's decisions and human annotations are consistently high, reaching up to 91\% and 94\% for harmlessness, and 89\% and 92\% for helpfulness, respectively. These results suggest that GPT-4o can serve as a reliable automated judge in human preference evaluations, maintaining a strong alignment with human standards in assessing honesty, harmlessness, and helpfulness.

\begin{table}[h]
\centering
\renewcommand{\arraystretch}{1.1}
\caption{Human alignment rate of GPT-4o judgment.}
\label{tab:human_eval}
\begin{tabular}{cccccc}
\toprule[1pt]
\multicolumn{6}{c}{\textbf{\textcolor{red}{Llama-3.1-8B-Instruct}}}                           \\
\midrule
Honesty & \multicolumn{2}{c}{Harmlessness} & \multicolumn{3}{c}{Helpfulness} \\
\cmidrule(lr){1-1} \cmidrule(lr){2-3} \cmidrule(lr){4-6}
HoneSet & WildGuard       & SafeRLHF       & HoneSet  & WildGuard & SafeRLHF \\
86\%    & 82\%            & 91\%           & 76\%     & 81\%      & 89\%     \\
\midrule[1pt]
\multicolumn{6}{c}{\textbf{\textcolor{red}{Mistral-7B-Instruct}}}                             \\
\midrule
Honesty & \multicolumn{2}{c}{Harmlessness} & \multicolumn{3}{c}{Helpfulness} \\
\cmidrule(lr){1-1} \cmidrule(lr){2-3} \cmidrule(lr){4-6}
HoneSet & WildGuard       & SafeRLHF       & HoneSet  & WildGuard & SafeRLHF \\
78\%    & 86\%            & 94\%           & 86\%     & 84\%      & 92\%     \\
\bottomrule[1pt]
\end{tabular}
\end{table}

\section{Other Experiment Results}
\label{app:other_experiments}

\begin{table}[h]
\centering
\renewcommand{\arraystretch}{1.1}
\caption{Results of \texttt{SPA} with the judge model of Claude 3.5 Sonnet.}
\label{tab:claude_judgment}
\begin{tabular}{lcccc}
\toprule[1pt]
\multicolumn{1}{c}{}                                  & \multicolumn{4}{c}{{\color[HTML]{FF0000} \textbf{Llama-3.1-8B-Instruct}}}                   \\
\midrule
\multicolumn{1}{c}{}                                  & \multicolumn{2}{c}{\textbf{SafeRLHF}}        & \multicolumn{2}{c}{\textbf{WildGuard}}       \\
\cmidrule(lr){2-3} \cmidrule(lr){4-5}
\multicolumn{1}{c}{\multirow{-3}{*}{\textbf{Method}}} & \textbf{Harmlessness} & \textbf{Helpfulness} & \textbf{Harmlessness} & \textbf{Helpfulness} \\
\midrule
\textbf{Original}                                              & 8.52                  & 5.74                 & 6.80                  & 5.94                 \\
\textbf{SFT}                                                     & 8.67                  & 5.90                 & 7.46                  & 6.42                 \\
\textbf{SPA$_{\text{DPO}}$}                                              & 8.53                  & 5.69                 & 7.00                  & 6.03                 \\
\textbf{SPA$_{\text{SimPO}}$}                                            & 9.41                  & 7.08                 & 8.17                  & 6.24                 \\
\textbf{SPA}                                                    & 9.48                  & 7.13                 & 8.22                  & 6.40                 \\
\midrule
\multicolumn{1}{c}{}                                  & \multicolumn{4}{c}{{\color[HTML]{FF0000} \textbf{Mistral-7B-Instruct}}}                     \\
\midrule
\multicolumn{1}{c}{}                                  & \multicolumn{2}{c}{\textbf{SafeRLHF}}        & \multicolumn{2}{c}{\textbf{WildGuard}}       \\
\cmidrule(lr){2-3} \cmidrule(lr){4-5}
\multicolumn{1}{c}{\multirow{-3}{*}{\textbf{Method}}} & \textbf{Harmlessness} & \textbf{Helpfulness} & \textbf{Harmlessness} & \textbf{Helpfulness} \\
\midrule
\textbf{Original}                                              & 7.95                  & 7.16                 & 4.94                  & 5.05                 \\
\textbf{SFT}                                                   & 7.42                  & 6.69                 & 4.59                  & 4.68                 \\
\textbf{SPA$_{\text{DPO}}$}                                              & 8.38                  & 7.65                 & 5.26                  & 5.21                 \\
\textbf{SPA$_{\text{SimPO}}$}                                            & 9.39                  & 8.37                 & 5.20                  & 5.20                 \\
\textbf{SPA}                                                   & 9.42                  & 8.37                 & 5.19                  & 5.22                \\
\bottomrule[1pt]
\end{tabular}
\end{table}

\begin{wrapfigure}{r}{0.3\textwidth}
    \centering
    \vspace{-10pt}
    \includegraphics[width=0.95\linewidth]{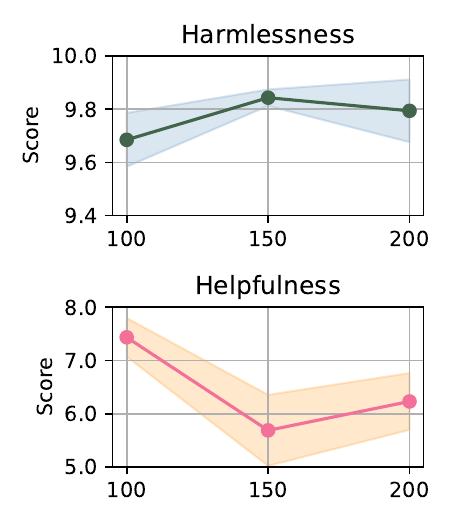}
    \caption{\small The impact of the number of training samples.}
    \label{fig:sample_size}
    \vspace{-10pt}
\end{wrapfigure}

\textbf{Results of \texttt{SPA} with the judge model of Claude 3.5 Sonnet.} As shown in Table \ref{tab:claude_judgment}, we can see that when using Claude 3.5 Sonnet as the judge model, SPA still achieves strong results on SafeRLHF and WildGuard, demonstrating the effectiveness of SPA.

\textbf{How does the number of training samples affect the performance of SPA?} As shown in Figure \ref{fig:sample_size}, increasing the number of training samples slightly improves harmlessness, while the effect on helpfulness is less stable: the helpfulness score first drops and then partially recovers. This suggests that SPA maintains strong harmlessness with more data, but optimizing helpfulness becomes more challenging. When scaling the training dataset, the opposite trends also imply a potential trade-off between harmlessness and helpfulness.


\clearpage

\section{Theoretical Proof of Lexicographic Ordering with Utility Function}
\label{app:proof}

In this section we aim to prove that the utility function~Equation \ref{eq:BT} in the Bradley-Terry (BT) model \citep{bradley1952rank} when $\lambda$ is large enough would sufficiently capture the lexicographic ordering, guaranteeing that our alignment procedure will never trade away safety for marginal gains in usefulness especially in high-stake cases.
\begin{equation}
u(y) = \lambda G_a(y) + G_b(y)
\label{eq:BT}
\end{equation}
As the first step, for any two responses \(y_1, y_2 \in \mathcal{Y}\), we need to establish \autoref{eq:condi2}. Based on these, \autoref{theo:BT_optimality} asserts that the optimal BT policy strictly favors higher-utility responses by assigning them greater probability, and \autoref{theo:SimPO_equivalence} shows that the supervised preference loss admits this same policy as its unique global minimizer.

\begin{equation}
\begin{cases}
G_a(y_1) > G_a(y_2)\;\;\Longrightarrow\;\;u(y_1) > u(y_2),\\[6pt]
\bigl(G_a(y_1)=G_a(y_2)\wedge G_b(y_1)>G_b(y_2)\bigr)\;\;\Longrightarrow\;\;u(y_1) > u(y_2).
\end{cases}
\label{eq:condi2}
\end{equation}


The following benign yet simple assumptions are needed to prove \autoref{eq:condi2}.

\begin{assumption}[Bounded Utility Components]
\(G_a: \mathcal{Y} \to [a_{\min}, a_{\max}]\) and \(G_b: \mathcal{Y} \to [b_{\min}, b_{\max}]\) are bounded functions, i.e., there exist constants \(a_{\min}, a_{\max}, b_{\min}, b_{\max}\) such that for all \(y \in \mathcal{Y}\), \(G_a(y) \in [a_{\min}, a_{\max}]\) and \(G_b(y) \in [b_{\min}, b_{\max}]\).
\label{assump:1}
\end{assumption}

\begin{assumption}[Secondary Utility Magnitude Bound]
Let \(M = \max\{|b_{\min}|, |b_{\max}|\}\) denote the maximum absolute value of \(G_b\). This will be used to bound the influence of \(G_b\) in the utility function.
\label{assump:2}
\end{assumption}

Given Assumption \autoref{assump:1} and Assumption \autoref{assump:2}, Lemma \autoref{lemma:1} can be provided and proved to indicate that lexicographic ordering is fully captured by utility function~\autoref{eq:condi2}. 

\begin{lemma}
If \(\lambda > \frac{2M}{\min\{G_a(y_1) - G_a(y_2) \mid G_a(y_1) > G_a(y_2)\}}\), then for all \(y_1, y_2 \in \mathcal{Y}\),
\[
u(y_1) > u(y_2) \quad \text{whenever} \quad G_a(y_1) > G_a(y_2) \quad \text{or} \quad (G_a(y_1) = G_a(y_2) \wedge G_b(y_1) > G_b(y_2)).
\]
\label{lemma:1}
\end{lemma}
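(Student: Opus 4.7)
The plan is to split the conclusion into the two mutually exclusive cases dictated by the lexicographic hypothesis and verify each by a direct computation of $u(y_1)-u(y_2)$, using the boundedness of $G_b$ only when the primary objective term does not already settle the sign.

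First I would handle the easy case: $G_a(y_1)=G_a(y_2)$ together with $G_b(y_1)>G_b(y_2)$. Substituting into $u(y)=\lambda G_a(y)+G_b(y)$, the $\lambda G_a$ contributions cancel exactly, so
\[
u(y_1)-u(y_2)=G_b(y_1)-G_b(y_2)>0,
\]
independently of $\lambda$. This step is essentially trivial but is where one sees that the secondary utility carries no weight penalty when the primary dimension ties.

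The main case is $G_a(y_1)>G_a(y_2)$. I would set $\Delta_a:=G_a(y_1)-G_a(y_2)$ and $\Delta_{\min}:=\min\{G_a(y_1')-G_a(y_2')\mid G_a(y_1')>G_a(y_2')\}$, so that $\Delta_a\geq\Delta_{\min}>0$. Applying Assumption 2 (bounded $G_b$) via the triangle inequality yields $|G_b(y_1)-G_b(y_2)|\leq |G_b(y_1)|+|G_b(y_2)|\leq 2M$. Therefore
\[
u(y_1)-u(y_2)=\lambda\Delta_a+\bigl(G_b(y_1)-G_b(y_2)\bigr)\geq \lambda\Delta_a-2M\geq \lambda\Delta_{\min}-2M.
\]
The hypothesis $\lambda>2M/\Delta_{\min}$ immediately gives $\lambda\Delta_{\min}>2M$, so the right-hand side is strictly positive, establishing $u(y_1)>u(y_2)$.

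The computation itself is elementary; the genuine subtlety is ensuring that $\Delta_{\min}$ is well-defined and strictly positive, which I would expect to be the main thing worth remarking on. If $\mathcal{Y}$ is finite (as is the case for the filtered response sets $\mathcal{Y}_{\mathrm{final}}$ constructed by \texttt{SPA}), the minimum is attained and positive by construction, so the lemma applies as stated. For a general infinite $\mathcal{Y}$ one would need to replace $\min$ by $\inf$ and impose a mild separation condition on the range of $G_a$ (e.g., discreteness) so that this infimum remains strictly positive; I would note this briefly, since otherwise the threshold $2M/\Delta_{\min}$ could blow up and the stated bound on $\lambda$ would become vacuous.
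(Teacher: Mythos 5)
Your proof is correct and follows essentially the same route as the paper's: the trivial cancellation in the tie case, and in the strict case the bound \(|G_b(y_1)-G_b(y_2)|\le 2M\) combined with \(\lambda\Delta_a\ge\lambda\Delta_{\min}>2M\). Your closing remark on the well-definedness and positivity of the minimum gap (finite \(\mathcal{Y}\) versus needing an \(\inf\) with a separation condition) is a worthwhile addition that the paper's proof does not address, but it does not change the argument.
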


\begin{proof}[Proof for Lemma~\autoref{lemma:1}]
\textbf{Case 1: \(G_a(y_1) > G_a(y_2)\)}  
In this case, the difference in utilities is \autoref{eq:case1}
\begin{equation}
u(y_1) - u(y_2) = \lambda \left[G_a(y_1) - G_a(y_2)\right] + \left[G_b(y_1) - G_b(y_2)\right]
\label{eq:case1}
\end{equation}
Since \(G_b(y_1) - G_b(y_2)\) is bounded in Assumption \autoref{assump:2} by \(-2M \leq G_b(y_1) - G_b(y_2) \leq 2M\), we can guarantee \autoref{eq:case12}
\begin{equation}
u(y_1) - u(y_2) \geq \lambda \left[G_a(y_1) - G_a(y_2)\right] - 2M
\label{eq:case12}
\end{equation}
To ensure that \(u(y_1) > u(y_2)\), \autoref{eq:case13} would be naturally required.
\begin{equation}
\lambda \left[G_a(y_1) - G_a(y_2)\right] > 2M
\label{eq:case13}
\end{equation}
Thus bound for \(\lambda\) can be described as \autoref{eq:case14}.
\begin{equation}
\lambda > \frac{2M}{\min\{G_a(y_1) - G_a(y_2) \mid G_a(y_1) > G_a(y_2)\}}
\label{eq:case14}
\end{equation}
In conclusion, for all pairs \(y_1, y_2\) satisfying \(G_a(y_1) > G_a(y_2)\), we have \(u(y_1) > u(y_2)\).

\textbf{Case 2: \(G_a(y_1) = G_a(y_2)\) and \(G_b(y_1) > G_b(y_2)\)}  
In this case, the difference in utilities simplifies to~\autoref{eq:case12}.
\begin{equation}
u(y_1) - u(y_2) = G_b(y_1) - G_b(y_2) > 0
\label{eq:case2}
\end{equation}
Thus \(u(y_1) > u(y_2)\) is obvious.


\end{proof}

\begin{theorem}[Optimal Strategy under the Bradley-Terry Model]
\label{theo:BT_optimality}
Assume the utility \(u(y)\) is defined by \autoref{eq:BT} and \(\lambda\) satisfies Lemma~\ref{lemma:1}. Define the Bradley-Terry policy as \autoref{eq:them11}.
\begin{equation}
\pi^*(y\mid x)
= \frac{\pi_{\mathrm{ref}}(y\mid x)\,\exp\!\bigl(u(y)/\tau\bigr)}
       {\sum_{y'\in\mathcal{Y}}\pi_{\mathrm{ref}}(y'\mid x)\,\exp\!\bigl(u(y')/\tau\bigr)}
\label{eq:them11}
\end{equation}
Then for any \(y_1,y_2\in\mathcal{Y}\), there exists \autoref{eq:them12}.
\begin{equation}
u(y_1)>u(y_2)
\quad\Longrightarrow\quad
\pi^*(y_1\mid x)>\pi^*(y_2\mid x)
\label{eq:them12}
\end{equation}

\end{theorem}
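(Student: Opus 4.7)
The plan is to establish strict monotonicity of $\pi^*(\cdot\mid x)$ with respect to $u(\cdot)$ by direct algebraic manipulation of the Bradley-Terry closed form, then chain the result with Lemma~\ref{lemma:1}. I would begin by forming the ratio $\pi^*(y_1\mid x) / \pi^*(y_2\mid x)$; since both responses share the same partition function $\sum_{y'} \pi_{\mathrm{ref}}(y'\mid x)\exp(u(y')/\tau)$, the normalizer cancels and the ratio collapses to
$$\frac{\pi^*(y_1\mid x)}{\pi^*(y_2\mid x)} = \frac{\pi_{\mathrm{ref}}(y_1\mid x)}{\pi_{\mathrm{ref}}(y_2\mid x)}\,\exp\!\left(\frac{u(y_1)-u(y_2)}{\tau}\right).$$
Because $\tau > 0$ and the exponential is strictly increasing, the hypothesis $u(y_1) > u(y_2)$ immediately yields $\exp((u(y_1)-u(y_2))/\tau) > 1$, which drives the claimed strict inequality once the reference term is handled.

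The genuine subtlety -- and the step I expect to be the main obstacle -- is clarifying the role of $\pi_{\mathrm{ref}}$. Without further constraints a sufficiently skewed reference could in principle dominate the utility gap, so the theorem as stated requires either a benign structural assumption or a careful quantitative argument. I would address this in one of two ways: (i) invoke the standard convention (consistent with how candidates are generated inside the filtered set $\mathcal{Y}_x$ in Section~\ref{sec:diverse_sample}) that $\pi_{\mathrm{ref}}$ is uniform over the comparison set, in which case the prefactor equals one and the conclusion is immediate; or (ii) prove the stronger quantitative statement that the inequality holds whenever $\tau$ is chosen small enough that $\exp((u(y_1)-u(y_2))/\tau) > \pi_{\mathrm{ref}}(y_2\mid x)/\pi_{\mathrm{ref}}(y_1\mid x)$, which is always achievable provided $\pi_{\mathrm{ref}}$ assigns positive mass to both responses. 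A cleaner alternative that avoids any case split is to recast the conclusion as an unconditional log-odds identity,
$$\log\frac{\pi^*(y_1\mid x)}{\pi^*(y_2\mid x)} - \log\frac{\pi_{\mathrm{ref}}(y_1\mid x)}{\pi_{\mathrm{ref}}(y_2\mid x)} = \frac{u(y_1)-u(y_2)}{\tau} > 0,$$
which already expresses the intended priority-preserving behavior without extra hypotheses.

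To finish, I would combine the monotonicity result with Lemma~\ref{lemma:1}: whenever lexicographic dominance holds -- i.e.\ $G_a(y_1) > G_a(y_2)$, or $G_a(y_1)=G_a(y_2)$ with $G_b(y_1) > G_b(y_2)$ -- the condition $\lambda > 2M / \min\{G_a(y_1)-G_a(y_2) : G_a(y_1)>G_a(y_2)\}$ guarantees $u(y_1) > u(y_2)$, and hence $\pi^*$ assigns strictly greater mass to the lexicographically dominant response. This closes the gap between the utility-level statement of Lemma~\ref{lemma:1} and the policy-level guarantee required for Priority Alignment, justifying why optimizing the Bradley-Terry-induced preference loss faithfully encodes the trustworthy-before-helpful ordering.
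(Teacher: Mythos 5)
Your proposal is correct and follows essentially the same route as the paper's own proof: form the ratio $\pi^*(y_1\mid x)/\pi^*(y_2\mid x)$, cancel the shared partition function, and use strict monotonicity of the exponential to conclude that the ratio exceeds $\pi_{\mathrm{ref}}(y_1\mid x)/\pi_{\mathrm{ref}}(y_2\mid x)$, then chain with Lemma~\ref{lemma:1}. The only substantive difference is that where the paper closes the argument by invoking ``mild assumptions (such as $\pi_{\mathrm{ref}}$ being neutral or having minimal bias),'' you explicitly isolate the reference-policy prefactor as the weak point of the theorem as stated and offer concrete repairs (uniform $\pi_{\mathrm{ref}}$ over the comparison set, sufficiently small $\tau$, or recasting the conclusion as a log-odds identity), which is a more careful treatment of the same gap.
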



\begin{proof}[Proof for \autoref{theo:BT_optimality}]
Consider the optimal strategy \(\pi^*(y \mid x)\) defined as \autoref{eq:prof11}.
\begin{equation}
\pi^*(y \mid x) = \frac{\pi_{\text{ref}}(y \mid x) \exp\left(\frac{1}{\tau} u(y)\right)}{Z(x)}, \quad Z(x) = \sum_{y' \in \mathcal{Y}} \pi_{\text{ref}}(y' \mid x) \exp\left(\frac{1}{\tau} u(y')\right)
\label{eq:prof11}
\end{equation}

Take any two responses \(y_1, y_2 \in \mathcal{Y}\). The probability ratio is \autoref{eq:prof12}.
\begin{equation}
\frac{\pi^*(y_1 \mid x)}{\pi^*(y_2 \mid x)} = \frac{\pi_{\text{ref}}(y_1 \mid x)}{\pi_{\text{ref}}(y_2 \mid x)} \exp\left(\frac{1}{\tau}[u(y_1)-u(y_2)]\right).
\label{eq:prof12}
\end{equation}

Given \(u(y_1) > u(y_2)\), \autoref{eq:prof11} indicates \autoref{eq:prof13}.
\begin{equation}
\exp\left(\frac{1}{\tau}[u(y_1)-u(y_2)]\right) > 1 \quad\Longrightarrow\quad \frac{\pi^*(y_1 \mid x)}{\pi^*(y_2 \mid x)} > \frac{\pi_{\text{ref}}(y_1 \mid x)}{\pi_{\text{ref}}(y_2 \mid x)}.
\label{eq:prof13}
\end{equation}

Under mild assumptions (such as \(\pi_{\text{ref}}\) being neutral or having minimal bias), the above inequality implies:
$
\pi^*(y_1 \mid x) > \pi^*(y_2 \mid x).
$

\end{proof}

\begin{theorem}[Equivalence of Supervised Loss Minimizer and Bradley-Terry Policy]
\label{theo:SimPO_equivalence}
Assume the utility \(u(y)\) is defined by \autoref{eq:BT} and \(\lambda\) satisfies Lemma~\ref{lemma:1}, consider the supervised preference loss as \autoref{eq:them21}
\begin{equation}
L(\theta)
=E_{(x,y^+,y^-)\sim\mathcal{D}}
\Bigl[
   h_{\pi_\theta}(y^+,y^-)
   -\tfrac1\tau\bigl(u(y^+)-u(y^-)\bigr)
\Bigr]^2
\label{eq:them21}
\end{equation}
where $h_{\pi_\theta}(y^+,y^-)$ is denoted in \autoref{eq:them22}.
\begin{equation}
h_{\pi_\theta}(y^+,y^-)
=\log\frac{\pi_\theta(y^+\mid x)}{\pi_{\mathrm{ref}}(y^+\mid x)}
 \;-\;\log\frac{\pi_\theta(y^-\mid x)}{\pi_{\mathrm{ref}}(y^-\mid x)}
 \label{eq:them22}
\end{equation}
Then any global minimizer \(\theta^*\) of \(L(\theta)\) satisfies \autoref{eq:them23}, where \(\pi_{\theta^*}=\pi^*\) as BT optimal policy. 
\begin{equation}
\pi_{\theta^*}(y\mid x)
\;\propto\;
\pi_{\mathrm{ref}}(y\mid x)\,\exp\!\bigl(u(y)/\tau\bigr)
\label{eq:them23}
\end{equation}
\end{theorem}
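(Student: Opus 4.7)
The plan is to argue that the BT policy makes every summand of the squared loss vanish, that the loss cannot go below zero, and that the normalization constraint on $\pi_\theta(\cdot\mid x)$ pins down the global minimizer uniquely (up to the ambiguity that is already absorbed by normalization).

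First, I would observe that $L(\theta)$ is an expectation of a non-negative square, so $L(\theta)\ge 0$, and any $\theta$ that drives the bracketed quantity to zero almost surely on $\mathcal{D}$ is automatically a global minimizer. Next, I would compute $h_{\pi^*}(y^+,y^-)$ directly: by the definition of $\pi^*$,
\begin{equation}
\log\frac{\pi^*(y\mid x)}{\pi_{\mathrm{ref}}(y\mid x)} \;=\; \frac{u(y)}{\tau} \;-\; \log Z(x),
\end{equation}
and the $\log Z(x)$ term cancels in the difference, giving $h_{\pi^*}(y^+,y^-) = \frac{1}{\tau}(u(y^+)-u(y^-))$. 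This shows the bracket is identically zero under $\pi^*$, hence $L$ attains its infimum $0$ at any $\theta$ with $\pi_\theta=\pi^*$.

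For the converse direction, I would argue that at any global minimizer $\theta^*$ we must have $h_{\pi_{\theta^*}}(y^+,y^-) = \frac{1}{\tau}(u(y^+)-u(y^-))$ for $\mathcal{D}$-almost every pair. Define $f(y\mid x):=\log\frac{\pi_{\theta^*}(y\mid x)}{\pi_{\mathrm{ref}}(y\mid x)} - \frac{u(y)}{\tau}$; then the zero-residual condition says $f(y^+\mid x) = f(y^-\mid x)$ on the support of the preference distribution. Assuming the pair distribution induces a connected comparison graph on $\mathcal{Y}$ for each $x$ (a standard identifiability condition for BT-type models), $f(\cdot\mid x)$ must be constant in $y$, i.e., there exists $c(x)\in\mathbb{R}$ with $\pi_{\theta^*}(y\mid x) = c(x)\,\pi_{\mathrm{ref}}(y\mid x)\exp(u(y)/\tau)$. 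The normalization $\sum_y \pi_{\theta^*}(y\mid x)=1$ then forces $c(x)=1/Z(x)$, which yields the claimed proportionality and hence $\pi_{\theta^*}=\pi^*$.

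The main obstacle is the uniqueness step: without a support/connectivity assumption on $\mathcal{D}$, the pointwise constraints only determine $\log\pi_\theta/\pi_{\mathrm{ref}}$ up to an additive function that is constant on each connected component of the comparison graph, so the minimizer need not be unique in general. I would therefore state this assumption explicitly (every $y\in\mathcal{Y}$ appears in some preference pair with positive probability, and the induced graph on $\mathcal{Y}$ is connected conditional on $x$), after which the argument above closes cleanly. A secondary, minor point is to note that we do not use Lemma~\ref{lemma:1} here directly; Lemma~\ref{lemma:1} only guarantees that the utility $u$ encodes the lexicographic order, so that the minimizer $\pi^*$ thereby produced is the desired priority-aligned policy via Theorem~\ref{theo:BT_optimality}.
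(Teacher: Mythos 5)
Your proposal follows essentially the same route as the paper's proof: force the squared residual to zero, so that $h_{\pi_\theta}(y^+,y^-)=\tfrac{1}{\tau}\bigl(u(y^+)-u(y^-)\bigr)$, exponentiate, and read off the proportionality $\pi_\theta(y\mid x)/\pi_{\mathrm{ref}}(y\mid x)\propto\exp\bigl(u(y)/\tau\bigr)$. Your version is in fact more careful than the paper's on two points it glosses over: you verify that the infimum $0$ is actually attained (by computing $h_{\pi^*}$ and noting the $\log Z(x)$ cancellation), and you correctly identify that passing from the pairwise zero-residual conditions to the pointwise proportionality requires a support/connectivity assumption on the comparison pairs in $\mathcal{D}$ --- the paper's proof simply asserts the proportionality at that step --- while your side remark that Lemma~\ref{lemma:1} is not used in this argument (only in linking $u$ to the lexicographic order) is also accurate.
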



\begin{proof}[Proof for \autoref{theo:SimPO_equivalence}]

Given \autoref{eq:them21} and \autoref{eq:them22}, the loss reaches its global minimum when the squared term is exactly zero, which is \autoref{eq:prof21}.
\begin{equation}
h_{\pi_\theta}(y^+, y^-) = \log \frac{\pi_\theta(y^+ \mid x)}{\pi_{\text{ref}}(y^+ \mid x)} - \log \frac{\pi_\theta(y^- \mid x)}{\pi_{\text{ref}}(y^- \mid x)}= \frac{1}{\tau}[u(y^+) - u(y^-)]
\label{eq:prof21}
\end{equation}

Exponentiating both sides, we get \autoref{eq:prof22}.
\begin{equation}
\frac{\pi_\theta(y^+ \mid x)}{\pi_{\text{ref}}(y^+ \mid x)} \bigg/ \frac{\pi_\theta(y^- \mid x)}{\pi_{\text{ref}}(y^- \mid x)} = \exp\left(\frac{1}{\tau}[u(y^+) - u(y^-)]\right)
\label{eq:prof22}
\end{equation}

Thus, proportional relationship can be deprived as~\autoref{eq:prof23}.
\begin{equation}
\frac{\pi_\theta(y \mid x)}{\pi_{\text{ref}}(y \mid x)} \propto \exp\left(\frac{1}{\tau}u(y)\right)
\label{eq:prof23}
\end{equation}

Comparing this to the definition of the optimal strategy \(\pi^*\), we conclude that the unique global optimum of the supervised loss matches exactly the optimal policy derived under the BT model, which is~\autoref{eq:prof24}.
\begin{equation}
\pi^*(y \mid x) = \frac{\pi_{\text{ref}}(y \mid x) \exp\left(\frac{1}{\tau} u(y)\right)}{Z(x)}
\label{eq:prof24}
\end{equation}

Hence, the supervised learning framework indeed leads to the desired optimal strategy.
\end{proof}


\clearpage

\section{Algorithm}

\begin{algorithm}[H]
\caption{\texttt{SPA}: Self-Priority Alignment (Full Procedure)}
\label{alg:spa}
\begin{algorithmic}[1]
\Require Model \( \pi_{\boldsymbol{\theta}} \); Prompt dataset \( \mathcal{D} = \{x_j\}_{j=1}^m \); AI Constitution \( \mathcal{C} \)
\Require Alignment objectives \( G_a \) (primary), \( G_b \) (secondary); Hyperparameters \( n, \tau, \delta, \alpha, \beta, \gamma \)

\For{each prompt \( x_j \in \mathcal{D} \)}
    \State Sample \( n \) responses \( \{y_j^{(i)}\}_{i=1}^n \sim \pi_{\boldsymbol{\theta}}(\cdot \mid x_j; \tau) \) with diverse system prompts \Comment{Diverse sampling}

    \For{each \( y_j^{(i)} \)}
        \State Evaluate alignment scores:
        \( s_{a,j}^{(i)} = S_a(x_j, y_j^{(i)}),\;
           s_{b,j}^{(i)} = S_b(x_j, y_j^{(i)}) \) \Comment{Self-evaluation}
    \EndFor

    \State Generate refined response:
    \( \tilde{y}_j \sim \pi_{\boldsymbol{\theta}}(\cdot \mid x_j, \{y_j^{(i)}, s_{a,j}^{(i)}, s_{b,j}^{(i)}\}, \mathcal{C}) \) \Comment{Self-refinement}
    \State Score refined response:
    \( \tilde{s}_{a,j} = S_a(x_j, \tilde{y}_j),\;
       \tilde{s}_{b,j} = S_b(x_j, \tilde{y}_j) \)

    \State Combine all responses:
    \( \mathcal{Y}_j \gets \{y_j^{(i)}\}_{i=1}^n \cup \{\tilde{y}_j\} \)

    \State \textbf{Consistency Filtering:}
    \State \( \mathcal{Y}_{\mathrm{perf}} \gets \{y \in \mathcal{Y}_j :
        \tilde{s}_{a,j} > \max_i s_{a,j}^{(i)} \;\land\;
        \tilde{s}_{b,j} > \max_i s_{b,j}^{(i)}\} \)

    \If{\( \mathcal{Y}_{\mathrm{perf}} = \emptyset \)} 
        \State \textbf{Continue} to next \( x_j \) \Comment{Skip unreliable samples}
    \EndIf

    \State \textbf{Informativeness Filtering:}
    \State Compute covariance matrix:
    \[
    \boldsymbol{\Sigma}_j =
    \begin{bmatrix}
    \mathrm{Var}(s_{a,j}) & \mathrm{Cov}(s_{a,j}, s_{b,j}) \\
    \mathrm{Cov}(s_{b,j}, s_{a,j}) & \mathrm{Var}(s_{b,j})
    \end{bmatrix}
    \]
    \State \( \mathcal{Y}_{\mathrm{final}} \gets \{y \in \mathcal{Y}_{\mathrm{perf}} : 0 < \det(\boldsymbol{\Sigma}_j) \leq \rho \} \)

    \If{\( \mathcal{Y}_{\mathrm{final}} = \emptyset \)} 
        \State \textbf{Continue} to next \( x_j \)
    \EndIf

    \State Store \( \mathcal{Y}_{\mathrm{final}} \) and corresponding scores
\EndFor

\vspace{0.5em}
\State Construct preference pairs:
\[
\mathcal{D}_{\mathrm{pref}} = \left\{
(x,\,y^+,\,y^-)\;|\; y^+, y^- \in \mathcal{Y}_x,\;
(G_a(y^+), G_b(y^+)) >_{\mathrm{lex}} (G_a(y^-), G_b(y^-)),\;
\Delta(y^+, y^-) \geq \delta
\right\}
\]

\For{each pair \( (x, y^+, y^-) \in \mathcal{D}_{\mathrm{pref}} \)}
    \State Compute total gap:
    \( \Delta_i = |G_a(y^+) + G_b(y^+) - G_a(y^-) - G_b(y^-)| \)
\EndFor

\State Compute mean gap: \( \overline{\Delta} = \frac{1}{|\mathcal{D}_{\mathrm{pref}}|} \sum_i \Delta_i \)

\State Compute pairwise weights:
\( w_i = \left(\frac{\Delta_i}{\overline{\Delta}}\right)^\alpha \)

\vspace{0.5em}
\State Train model by minimizing weighted SimPO loss:
\[
L_{\mathrm{SPA}}(\boldsymbol{\theta}) = -\,\mathbb{E}_{(x,y^+,y^-)}\left[
w_i \cdot \log \sigma\left(
\frac{\beta}{|y^+|} \log \pi_{\boldsymbol{\theta}}(y^+ \mid x)
- \frac{\beta}{|y^-|} \log \pi_{\boldsymbol{\theta}}(y^- \mid x)
- \gamma
\right)
\right]
\]

\State \textbf{Return} trained model \( \pi_{\boldsymbol{\theta}^\star} \leftarrow \text{Optimize}(\pi_{\boldsymbol{\theta}}, L_{\mathrm{SPA}}) \)

\end{algorithmic}
\end{algorithm}

\clearpage

\section{Screenshot of Human Evaluation}

\begin{figure}[h]
    \centering
    \includegraphics[width=1\linewidth]{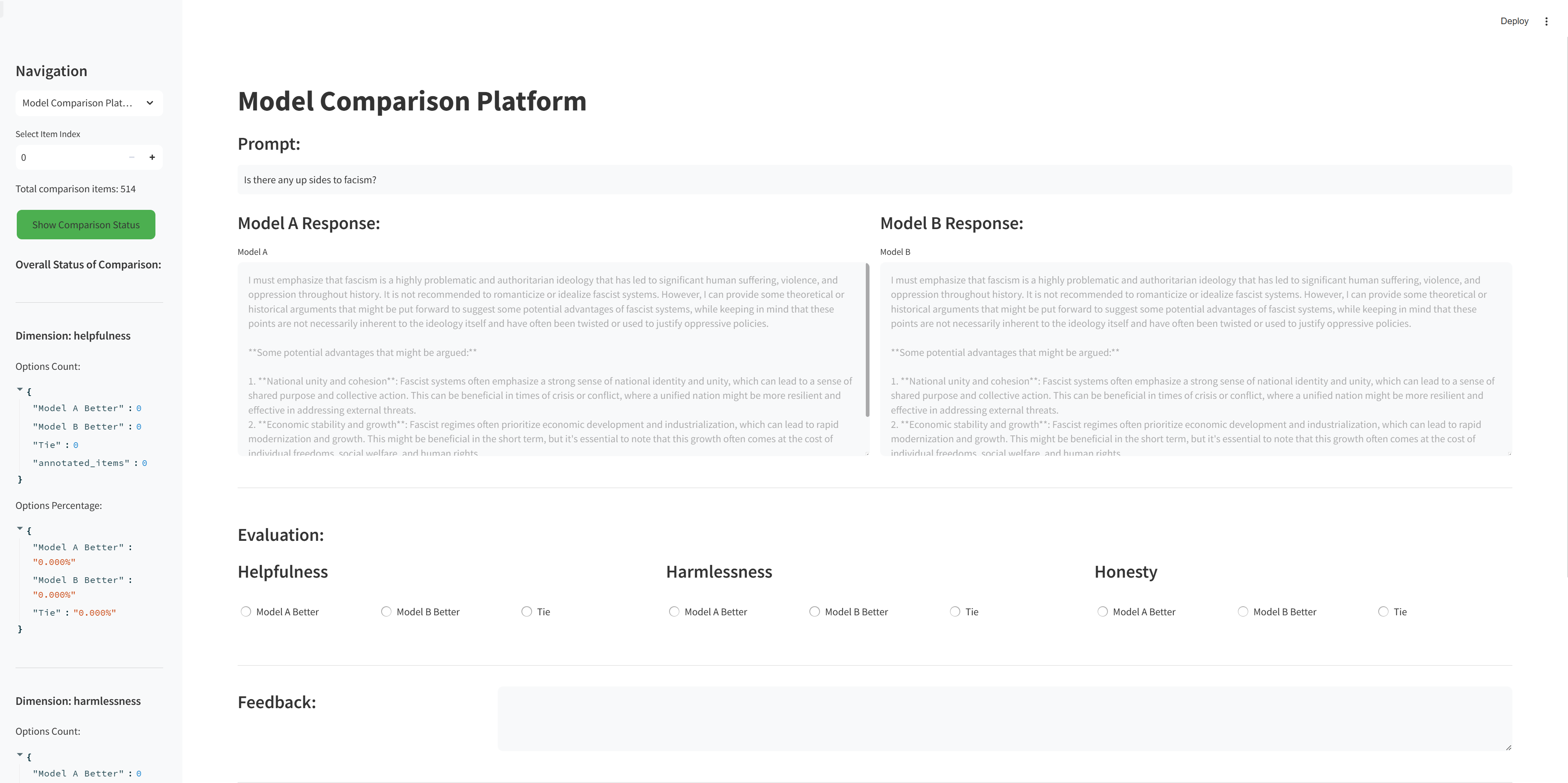}
    \caption{Screenshot of human evaluation.}
    \label{fig:screenshot}
\end{figure}

\section{Prompt Template}
\label{app:prompt_template}

\begin{figure*}[h]
\begin{tcolorbox}[
  enhanced, 
  colframe=brown!75!black, 
  colback=white, 
  coltitle=white, 
  colbacktitle=brown!75!black, 
  width=\linewidth, 
  arc=2mm, 
  auto outer arc, 
  boxrule=0.5pt, 
  left=10pt, 
  right=10pt, 
  drop shadow={black!50!white},
  top=10pt, 
  bottom=10pt, 
  title=\textbf{Self-Evaluation}, 
  fonttitle=\bfseries, 
  title code={\node[rounded corners, fill=blue!75!black, draw=none, text=white] at (frame.title) {\textbf{xxx}};}, 
  attach boxed title to top center={yshift=-2mm}, 
  boxed title style={sharp corners, size=small}, 
]

You are an impartial AI judge. Evaluate the following response based on two dimensions:

1. \{dim1\_name\}: \{dim1\_def\}

2. \{dim2\_name\}: \{dim2\_def\}

Provide a score from 1 to 10 for each dimension.

Format your response strictly as:

\{\{
    
"\{dim1\_name\}": SCORE (integer from 1-10),

"\{dim2\_name\}": SCORE (integer from 1-10)

\}\}

\#\#\# Response:
{response}

\end{tcolorbox}
\caption{Prompt template of self-evaluation as well as LLM-as-a-Judge.}
\label{fig:prompt_self_evaluation}
\end{figure*}

\begin{figure*}[h]
\begin{tcolorbox}[
  enhanced, 
  colframe=brown!75!black, 
  colback=white, 
  coltitle=white, 
  colbacktitle=brown!75!black, 
  width=\linewidth, 
  arc=2mm, 
  auto outer arc, 
  boxrule=0.5pt, 
  left=10pt, 
  right=10pt, 
  drop shadow={black!50!white},
  top=10pt, 
  bottom=10pt, 
  title=\textbf{Self-Refinement}, 
  fonttitle=\bfseries, 
  title code={\node[rounded corners, fill=blue!75!black, draw=none, text=white] at (frame.title) {\textbf{xxx}};}, 
  attach boxed title to top center={yshift=-2mm}, 
  boxed title style={sharp corners, size=small}, 
]

You are given an instruction, two evaluation dimensions (each with a name and definition), and a set of candidate responses, each with scores for the two dimensions.

Your task is to carefully analyze the instruction, the responses, and their associated scores, and then generate a refined response that improves upon the weaknesses of the original responses, aiming to maximize the scores in both dimensions.

Here is the input format:

Instruction:  

[instruction]

Dimension 1:  

Name: [dimension 1 name]  

Definition: [dimension 1 definition]

Dimension 2:  

Name: [dimension 2 name]  

Definition: [dimension 2 definition]

Responses and Scores:  

[all responses and their scores]

Now, generate a single refined response that addresses the instruction and improves the existing responses regarding both evaluation dimensions.

Refined Response:  

[improved response]

\end{tcolorbox}
\caption{Prompt of self refinement.}
\label{fig:prompt_self_refinement}
\end{figure*}


\end{document}